\newcommand{\cmark}{\ding{51}}%
\newcommand{\xmark}{\ding{55}}%
\newtheorem{theorem}{Theorem}
\theoremstyle{definition}
\newtheorem{definition}{Definition}
\title{SimSR: Simple Distance-based State Representation\\ for Deep Reinforcement Learning}
\author {
    % Authors
    Hongyu Zang \textsuperscript{\rm 1},
    Xin Li \textsuperscript{\rm 1},
    Mingzhong Wang \textsuperscript{\rm 2}
}
\theoremstyle{proposition}
\newtheorem{proposition}{Proposition}
\begin{document}

\maketitle

\begin{abstract}
This work explores how to learn robust and generalizable state representation from image-based observations with deep reinforcement learning methods. Addressing the computational complexity, stringent assumptions and representation collapse challenges in existing work of bisimulation metric, we devise \textit{Simple State Representation} (SimSR) operator. SimSR enables us to design a stochastic approximation method that can practically learn the mapping functions (encoders) from observations to latent representation space. In addition to the theoretical analysis and comparison with the existing work, we experimented and compared our work with recent state-of-the-art solutions in visual MuJoCo tasks. The results shows that our model generally achieves better performance and has better robustness and good generalization.
\end{abstract}

\section{Introduction}
Deep reinforcement learning (RL) with image-based observations commonly leverages deep convolutional networks to obtain low-dimensional representations of environments to accomplish sequential decision tasks. Conventionally, the features, which deep convolutional networks embed raw observations into, are modeled as state representations. As these representations are generally learned implicitly as a byproduct of deep RL, they are unlikely to provide sufficient and compact information to summarize the task-relevant states or enhance policy learning. This explains in part why learning policies from physical states, e.g., control inputs like robot's velocity, is generally more sample-efficient than learning policies from image-based environment. 

However, since physical states are unobtainable in many real-world domains, recent work in RL has attempted to learn a mapping from image space to low-dimensional representation space that contains the concise state information with the help of auxiliary tasks and self-supervision objectives~\cite{ DBLP:conf/icml/PathakAED17, DBLP:conf/icml/LaskinSA20, DBLP:conf/nips/LeeNAL20, DBLP:conf/aaai/Yarats0KAPF21, DBLP:journals/corr/abs-2106-04379, DBLP:journals/corr/abs-2102-13268}. Some existing work learns low-dimensional representation by enforcing reconstruction objective to improve the predictiveness and consistency of representations~\cite{DBLP:conf/iclr/JaderbergMCSLSK17, DBLP:conf/icml/PathakAED17, DBLP:conf/aaai/Yarats0KAPF21}. Other
work utilizes contrastive loss for heuristic augmentation pairs to achieve more distinguishable representations of the environment observations~\cite{DBLP:journals/corr/abs-1807-03748, DBLP:conf/nips/AnandROBCH19, DBLP:conf/iclr/Warde-FarleyWKI19, DBLP:conf/icml/LaskinSA20}. However, both lines of research are prone to task-agnostic representations as they encode all signals no matter which are relevant to the task or not.

The latest study~\cite{DBLP:conf/icml/GeladaKBNB19, DBLP:conf/aaai/Castro20, DBLP:conf/iclr/0001MCGL21, castro2021mico} presents promising results on learning robust state representation with bisimulation metric~\cite{DBLP:conf/uai/FernsP14, DBLP:conf/aaai/Castro20}, which implicates the state equivalence/similarity and thus is commonly used for MDP (Markov Decision Process) model reduction/minimization. Though driven by various motivations, the use of bisimulation metric in RL can be thought of as learning suitable measurement in the representation space. These work trains the mapping functions (encoders) to minimize the discrepancy between the representations of similar observation pairs in the latent representation space according to their bisimulation metric. Intuitively, the distance between two encoded observations should correspond to their ``behavioral difference'' that is jointly determined by the difference of their immediate rewards and the difference of the distributions over their next states. Although effective, the bisimulation metric needs to compute the Wasserstein distance between distributions, which is computationally expensive.

Existing work only partially addressed the complexity issue, resulting in limitations in deploying the solutions. 
\citet{DBLP:conf/icml/GeladaKBNB19} require a stringent assumption that the learned representation is Lipschitz, so that the dual form of Wasserstein distance can be used to alleviate the computational complexity. \citet{DBLP:conf/iclr/0001MCGL21} assume the state distribution is Gaussian, so that it can use euclidean distance of the Gaussian distribution in the latent space to compute the closed-form Wasserstein distance, thus optimizing the $\ell_1$ distance between two representations. However, the inconsistency of $\ell_1$ distance and euclidean distance may steer inaccurate approximation. \citet{castro2021mico} develop MICo operator to update a sample-based metric that does not contain Wasserstein distance. However, this design has the risk to fall in the failure mode of representation collapse since it violates the ``zero self-distance'' property.

Therefore, we argue that it is crucial to find a measurement in the representation space that is: (1) functionally equivalent to the bisimulation metric, (2) simpler in computational complexity, and (3) insusceptible to representation collapse issue. Intuitively, such a measurement, which plays the same role as bisimulation metric in training encoder, should improve the learning efficiency
in continuous and high-dimensional observation space while maintaining the ability of measuring the ``long-term behavior'' of states and reserving good theoretical properties.

In this paper, we propose \textit{Simple State Representation} (SimSR) to achieve task-relevant latent state representation while satisfying the aforementioned three features.
Specifically, we adapt bisimulation metric with a cosine-based operator, which is referred to as SimSR operator in the sequel, to measure the behavioral difference of two states in the representation space. SimSR aims to guarantee that the observations are embedded in a better representation space. Since SimSR involves the cosine distance, the projected latent space can be viewed as a unit sphere, where the biases induced by the scale of the ``behavioral difference'' between states, especially the scale of the reward, can be removed.
Besides, since cosine distance has a guarantee of ``zero self-distance'', SimSR theoretically avoids the case that states with different values collapse to the same representation in our proposed framework. 

To avoid the computation of expectation, SimSR operator applies sample-based computation, which requires the sampling of the next state representations. This can be achieved by either 1) sampling the next observations and encoding them into the latent representation space or 2) encoding the current observations into the latent space and learning the latent transition model explicitly to compute the next state representations. The former approach is more convenient in implementation but it may involve redundant information. In comparison, the latter is able to provide more compact transition dynamics of the entire environment, resulting in better sample efficiency. To provide a comprehensive study, we developed and experimented two types of SimSR update operator for each approach. For the latter operator, we utilized an ensemble version of Gaussian distribution to facilitate the latent transition modelling.

In theory, SimSR confines the mapping to a set of functions yielding a low-dimensional state representation space, where the distance between the state representations is consistent to the ``behavioral difference'' of the states. In practice, neural networks can be utilized to minimize the SimSR loss to find one of such mapping functions. More specifically, we utilized the momentum encoder~\cite{DBLP:conf/cvpr/He0WXG20} to stabilize the learning process.

Our contributions can be summarized as follows:

\begin{enumerate}
    \item We adapt bisimulation metric coupled with cosine-based measurement to align the state representations' similarity with their behavioral consistency to obtain ``good'' state representations for RL tasks.
    \item To remove the redundant information from the observation space, we additionally construct a compact latent dynamics model to facilitate the sampling procedure. The proposed SimSR loss is optimized to learn state representations from high-dimensional pixel-based input. 
    \item We empirically experimented SimSR and demonstrated that it can (1) achieve comparable and better/robust performance over existing methods on a set of continuous control benchmarks, (2) learn more robust representation against task-irrelevant information, and (3) potentially generalize to unseen tasks.
\end{enumerate}

\section{Related Work}

\subsection{Representation learning in RL}
A good representation of state is crucial to the success of applying RL. 
There exist previous work ~\cite{DBLP:conf/ijcnn/LangeR10, DBLP:conf/ijcnn/LangeRV12, DBLP:conf/nips/LeeNAL20, DBLP:conf/aaai/Yarats0KAPF21} devoted to training deep auto-encoders with reconstruction loss to learn low-dimensional representations and improve the policy performance in visual reinforcement learning.
Some other RL methods~\cite{DBLP:conf/iclr/ShelhamerMAD17, DBLP:conf/icml/HafnerLFVHLD19, DBLP:conf/nips/LeeFLGLCG20} proposed to learn state representation from predictive loss, yet suffering from the additional complexity of balancing various auxiliary losses. More recently, some work~\cite{DBLP:conf/icml/LaskinSA20,DBLP:conf/iclr/YaratsKF21, DBLP:conf/icml/YaratsFLP21, DBLP:conf/icml/StookeLAL21} has used contrastive losses to learn representations and achieved considerable performance improvement. However, all these methods capture all features in the observation space no matter they are relevant to the downstream tasks or not, which in turn may degrade the policy performance.

\subsection{Bisimulation}
Early work on overcoming the curse of dimensionality~\cite{DBLP:journals/ai/GivanDG03, DBLP:conf/isaim/LiWL06} defined equivalence relations of the states to aggregate states and reduce the system complexity. However, the equivalence defined on the stochastic processes is impractical since it requires the transition distribution to be exactly the same. \citet{ DBLP:conf/uai/FernsPP04,DBLP:conf/uai/FernsP14} addressed the issue with bisimulation metrics, which measure how similar two states are and can also be used as a distance function to aggregate states more easily. 
Recently, \citet{DBLP:conf/icml/GeladaKBNB19} have explored the use of a variety of metrics and showed their $l_2$ distance representation has the upper bounds as the bisimulation distance, while requiring the stringent assumption of the representations being Lipschitz. In consideration that bisimulation metric may result in improper comparison between states, \citet{DBLP:conf/aaai/Castro20} developed on-policy bisimulation metric, which unfortunately still suffers the difficulty of computing the intractable Wasserstein distance.
As the successors of on-policy bisimulation metric,
\citet{DBLP:conf/iclr/0001MCGL21} modeled the latent dynamic transition as Gaussian distribution to alleviate the computational complexity of Wasserstein distance, while
\citet{castro2021mico} introduced the MICo distance to approximate the bisimulation metric. Both of them are restrictive as either having the stringent assumption and the distance inconsistency, or being prone to the failure mode of representation collapse. In comparison, we devised a simple and flexible measurement to characterize the behavioral similarity between states without using the Wasserstein distance, and we proved that our approach can address the aforementioned issues both theoretically and experimentally. 

\section{Preliminaries}

This section explains the basic notations in modelling RL problems and bisimulation.  

\subsection{Problem setting}
This paper focuses on learning from the environment with infinite image-based observations. We consider the underlying RL problems as a \textbf{block Markov decision process} (BMDP) \cite{DBLP:conf/icml/DuKJAD019}, which is represented as a tuple $\left\langle\mathcal{X}, \mathcal{S}, \mathcal{A}, r, \phi, q, \mathcal{P}, \mathcal{P}_0, \gamma\right\rangle$ with a finite unobservable state space $\mathcal{S}$, action space $\mathcal{A}$, and possibly infinite but observable space $\mathcal{X}$. With the assumption that each observation $\mathbf{x}\in \mathcal{X}$ uniquely determines its generating state $\mathbf{s} \in \mathcal{S}$, we can obtain the latent state regarding its observation by a projection function $\phi(\mathbf{x}):\mathcal{X}\rightarrow\mathcal{S}$. Therefore, $\mathbf{s}$ and $\phi(\mathbf{x})$ can be used interchangeably. The dynamics of a BMDP is described by the initial latent state distribution $\mathcal{P}_0$ and the state-transition probability function $\mathcal{P}$ which decides the next latent state of the agent $\mathbf{s}'\sim \mathcal{P}(\mathbf{s}'|\mathbf{s},\mathbf{a})$. The corresponding transition function under the observation space is defined as $\mathbf{x}'\sim \hat{\mathcal{P}}(\mathbf{x}'|\mathbf{x},\mathbf{a})$, where $\hat{\mathcal{P}}(\mathbf{x}'|\mathbf{x},\mathbf{a})= q(\mathbf{x}'|\mathbf{s}') \mathcal{P}(\mathbf{s}'|\mathbf{s},\mathbf{a})$ and $q$ 
indicates the probability of an latent state $\mathbf{s}$ presenting itself as an observation $\mathbf{x}$. 
The agent in a latent state $\mathbf{s} \in\mathcal{S}$ selects an action $\mathbf{a}\in\mathcal{A}$ according to the policy $\pi(\mathbf{a}|\mathbf{s})$. 

Following the assumption that the projections from the observation space to the latent state space are deterministic and the environment always provides the reward function depending on the observations, the performance of the state-action pair is quantified by the reward function $r(\mathbf{x},\mathbf{a})$ given by the environment. 
Similarly, the latent space reward $r(\mathbf{s},\mathbf{a})$ can be defined based on the environment reward $r(\mathbf{x},\mathbf{a})$ with the transition models.
$\gamma$ is a discount factor ($0<\gamma<1$) which quantifies how much value we weigh for future rewards. 
The goal of the agent is to find the optimal policy $\pi(\mathbf{a}|\mathbf{s})$ to maximize the expected reward $\mathbb{E}_{\mathbf{x}_{0}, \mathbf{a}_{0}, \ldots}\left[\sum_{t=0}^{\infty} \gamma^{t} r\left(\mathbf{x}_{t}, \mathbf{a}_{t}\right)\right]$. 
Besides, we approximate stacked pixel images as the observations in the context.

\subsection{Bisimulation metric} \label{bisimulation}
Bisimulation measures equivalence relations on MDPs with a recursive form: two states are deemed equivalent if they share the equivalent distributions over the next equivalent states and they have the same immediate reward~\cite{DBLP:conf/popl/LarsenS89, DBLP:journals/ai/GivanDG03}. However, since bisimulation considers equivalence for all actions, including bad ones, it commonly results in “pessimistic” outcomes. Instead, \citet{DBLP:conf/aaai/Castro20} developed \textbf{$\pi$-bisimulation} which removes the requirement of considering each action and only needs to consider the actions induced by a policy $\pi$.

\begin{definition}\cite{DBLP:conf/aaai/Castro20}
\label{def_pi_bisim}
Given an MDP $\mathcal{M}$, an equivalence relation $E^{\pi} \subseteq \mathcal{S} \times \mathcal{S}$ is a $\pi$-bisimulation relation if whenever $(\mathbf{s},\mathbf{u})\in E^{\pi}$ the following properties hold:
\begin{enumerate}
    \item $r(\mathbf{s},{\pi})=r(\mathbf{u},{\pi})$
    \item $\forall C \in \mathcal{S}_{E^{\pi}}, P(C|\mathbf{s},{\pi})=P(C|\mathbf{u},{\pi})$
\end{enumerate}
where $\mathcal{S}_{E^{\pi}}$ is the state space $\mathcal{S}$ partitioned into equivalence classes defined by $E^{\pi}$. Two states $\mathbf{s},\mathbf{u} \in S$ are \textbf{$\pi$-bisimilar} if there exists a \textbf{$\pi$-bisimulation relation} $E^{\pi}$ such that $(\mathbf{s}, \mathbf{u}) \in E^{\pi}$. 
\end{definition}
However, $\pi$-bisimulation is still too stringent to be applied at scale as $\pi$-bisimulation relation emphasizes the equivalence is a binary property: either two states are equivalent or not, thus becoming too sensitive to perturbations in the numerical values of the model parameters. The problem becomes even more prominent when deep frameworks are applied. 

Thereafter, \citet{DBLP:conf/aaai/Castro20} proposed a \textbf{$\pi$-bisimulation metric} to leverage the absolute value between the immediate rewards w.r.t. two states and the $1$-Wasserstein distance ($\mathcal{W}_1$) between the transition distributions conditioned on the two states and the policy $\pi$ to formulate such measurement.

Although the Wasserstein distance is a powerful metric to calculate the distance between two probability distributions, it requires to enumerate all states which is impossible in RL tasks of continuous state space. Thus, \citet{DBLP:conf/aaai/Castro20} only considered the deterministic MDP problems to avoid the computation of Wasserstein distance, and rewrite the operator $\mathcal{F}^{\pi}$ with a form of  $\mathcal{F}^{\pi}(d)(\mathbf{s}, \mathbf{u})=|r_{\mathbf{s}}^\pi-r_{\mathbf{u}}^\pi|+\gamma d\left(s', u'\right)$, where $s'$ and $u'$ are the deterministic next states of an agent starting from state $\mathbf{s}$ and $\mathbf{u}$, respectively, under the policy $\pi$.

Various extensions have been proposed \cite{DBLP:conf/icml/GeladaKBNB19,DBLP:conf/iclr/0001MCGL21, castro2021mico} to reduce the computational complexity. A representative approach is MICo distance \cite{castro2021mico}, which restricts the coupling class to the independent coupling and thus can be oblivious to the intractable Wasserstein distance. The MICo operator and its associated theoretical guarantee is given as: 
\begin{theorem}
\label{theorem:MICo_dis}\cite{castro2021mico}
Let the reward $r_{\mathbf{x}}^{\pi}=\sum_{\mathbf{a}\in\mathcal{A}}\pi(\mathbf{a}|\mathbf{s})r(\mathbf{x},\mathbf{a})$, the transition $\hat{\mathcal{P}}=\sum_{\mathbf{a}\in\mathcal{A}}\pi(\mathbf{a}|\mathbf{s})\hat{\mathcal{P}}(\cdot|\mathbf{x},\mathbf{a})$. Given a policy $\pi$, MICo distance $\mathscr{F}^{\pi}\colon \mathbb{R}^{\mathcal{S}\times \mathcal{S}}\rightarrow\mathbb{R}^{\mathcal{S}\times \mathcal{S}}$ as:
\begin{equation}
\begin{aligned}
    \label{ori_trans}
    \mathscr{F}^{\pi}U(\mathbf{x},\mathbf{y})=&|r_{\mathbf{x}}^{\pi}-r_{\mathbf{y}}^{\pi}|+\\\gamma &\mathbb{E} _{\mathbf{x}'\sim \hat{\mathcal{P}}_{\mathbf{x}}^{\pi},\mathbf{y}'\sim \hat{\mathcal{P}}_{\mathbf{y}}^{\pi}}[U(\mathbf{x}',\mathbf{y}')]
\end{aligned}
\end{equation}
has a fixed point $U^{\pi} \colon \mathcal{S}\times \mathcal{S}\rightarrow \mathbb{R}$.
\end{theorem}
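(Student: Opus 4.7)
The plan is to prove this as a standard Banach fixed-point argument: I will exhibit a complete metric space of functions on which $\mathscr{F}^{\pi}$ acts, show that $\mathscr{F}^{\pi}$ is a $\gamma$-contraction with respect to a suitable norm, and then invoke the Banach fixed-point theorem to produce the desired $U^{\pi}$.

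First I would fix the function space. Assuming the reward is bounded (a standard assumption; say $|r(\mathbf{x},\mathbf{a})| \le R_{\max}$, so $|r_{\mathbf{x}}^{\pi} - r_{\mathbf{y}}^{\pi}| \le 2R_{\max}$), I take the Banach space $\mathcal{B} = \{U \colon \mathcal{S}\times\mathcal{S} \to \mathbb{R} : \|U\|_{\infty} < \infty\}$ equipped with the supremum norm $\|U\|_{\infty} = \sup_{\mathbf{x},\mathbf{y}} |U(\mathbf{x},\mathbf{y})|$. I would verify that $\mathscr{F}^{\pi}$ maps $\mathcal{B}$ into itself: the reward term is bounded by $2R_{\max}$, and the expectation term is bounded by $\gamma\|U\|_{\infty}$ (since expectations cannot inflate sup norms), so $\|\mathscr{F}^{\pi}U\|_{\infty} \le 2R_{\max} + \gamma \|U\|_{\infty} < \infty$.

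Next I would establish the contraction property. For any $U, V \in \mathcal{B}$ and any $(\mathbf{x},\mathbf{y}) \in \mathcal{S}\times\mathcal{S}$, the reward terms cancel, leaving
\begin{equation*}
|\mathscr{F}^{\pi}U(\mathbf{x},\mathbf{y}) - \mathscr{F}^{\pi}V(\mathbf{x},\mathbf{y})| = \gamma\,\bigl|\mathbb{E}_{\mathbf{x}'\sim \hat{\mathcal{P}}_{\mathbf{x}}^{\pi},\,\mathbf{y}'\sim \hat{\mathcal{P}}_{\mathbf{y}}^{\pi}}\bigl[U(\mathbf{x}',\mathbf{y}') - V(\mathbf{x}',\mathbf{y}')\bigr]\bigr|.
\end{equation*}
Pulling the absolute value inside the expectation by Jensen's (or the triangle) inequality and then bounding the integrand uniformly gives $|\mathscr{F}^{\pi}U(\mathbf{x},\mathbf{y}) - \mathscr{F}^{\pi}V(\mathbf{x},\mathbf{y})| \le \gamma \|U - V\|_{\infty}$. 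Taking the supremum over $(\mathbf{x},\mathbf{y})$ yields $\|\mathscr{F}^{\pi}U - \mathscr{F}^{\pi}V\|_{\infty} \le \gamma \|U - V\|_{\infty}$, and since $0 < \gamma < 1$ the operator is a contraction. The Banach fixed-point theorem then produces a unique $U^{\pi} \in \mathcal{B}$ with $\mathscr{F}^{\pi}U^{\pi} = U^{\pi}$, which is the claim.

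The proof is largely routine, so I do not expect a serious obstacle; the only delicate points are bookkeeping. One is measurability/well-definedness of the expectation when $\mathcal{S}$ is (possibly uncountable) continuous, which I would handle by restricting to Borel-measurable bounded $U$ and noting that $\mathscr{F}^{\pi}$ preserves this subspace, which is itself a Banach space under $\|\cdot\|_{\infty}$. The other is the implicit boundedness of the reward; if one prefers to drop that assumption, one can instead work in the weighted space used by \citet{castro2021mico} or argue via monotone iterates starting from $U_0 \equiv 0$, showing the Cauchy property of $\{(\mathscr{F}^{\pi})^n U_0\}$ directly from the geometric factor $\gamma^n$. Either way, the contraction estimate above is the core of the argument.
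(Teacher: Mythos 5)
Your contraction-mapping argument is correct and is essentially the same route the paper takes: the paper defers the proof of this statement to \citet{castro2021mico} (their Proposition 4.2), and reproduces exactly this argument when proving its own Theorem 2 in the appendix --- the reward terms cancel, the expectation over the transition kernel is bounded by the sup norm, giving a $\gamma$-contraction and hence a unique fixed point by Banach. Your additional bookkeeping (verifying the self-map property, bounded rewards, and measurability) is sound and, if anything, more careful than the paper's sketch.
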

However, we argue that repeatedly applying MICo operator requires a non-trivial necessity of MICo distance being a diffuse metric. Such a requirement diminishes the supposed practical advantage of MICo and strains its flexibility and facilitation in application. 
In this work, we focus on learning more effective state representation, avoiding representation collapse, and therefore developing a more concise update operator, the SimSR operator, to circumvent the aforementioned limitations.

\section{Simple State Representation (SimSR) Framework}
This section defines Simple State Representation (SimSR) operator. SimSR operators utilize cosine distance to measure the behavioral difference between state representations. Consequently, SimSR loss is introduced to construct the mapping function and the state representation from a learning perspective.

\subsection{SimSR Operator}
\label{SimSR_distance}

Our goal is to learn robust latent state representation to benefit RL with boosting efficiency. 
The intuition is the bisimular states should have ``similar'' or even the same representations in a low-dimensional feature space. Therefore, the bisimulation metric is applied to guide the learning of the mapping function. As MICo distance requires itself to be a diffuse metric, we consider if relaxing the restriction so that a non-diffuse metric can still reach the same fixed point. We define the measurement between two latent states with cosine distance (normalized dot product distance) as the base ``metric'', expressible as:
\begin{equation}
    \overline{\text{cos}}_\phi(\mathbf{x},\mathbf{y}) = 1-\cos_{\phi}(\mathbf{x},\mathbf{y}) = 1-\frac{\phi(\mathbf{x})^T\cdot \phi(\mathbf{y})}{\|\phi(\mathbf{x})\|\cdot\|\phi(\mathbf{y})\|}.
\end{equation}
We therefore deliberately construct the update operator $\mathscr{F}^{\pi}$ which is based on cosine distance but still has the same fixed point as MICo.

\begin{theorem}
\label{theorem:SimSR_op}
Given a policy $\pi$, Simple State Representation (SimSR) which is updated as:
\begin{equation}
\begin{aligned}
    \label{ori_trans}
    \mathscr{F}^{\pi}\overline{\text{cos}}_{\phi}(\mathbf{x},\mathbf{y})=&|r_{\mathbf{x}}^{\pi}-r_{\mathbf{y}}^{\pi}|+\\\gamma &\mathbb{E} _{\mathbf{x}'\sim \hat{\mathcal{P}}_{\mathbf{x}}^{\pi},\mathbf{y}'\sim \hat{\mathcal{P}}_{\mathbf{y}}^{\pi}}[\overline{\text{cos}}_{\phi}(\mathbf{x}',\mathbf{y}')]
\end{aligned}
\end{equation}
has the same fixed point as MICo\footnote{All proofs are provided in Appendix.}.
\end{theorem}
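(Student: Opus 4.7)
The plan is to observe that both the MICo update rule and the SimSR update rule are instances of the \emph{same} functional Bellman-style operator, just applied to different initial functions. Writing
\begin{equation*}
TF(\mathbf{x},\mathbf{y}) \;=\; |r_{\mathbf{x}}^{\pi}-r_{\mathbf{y}}^{\pi}| \;+\; \gamma\,\mathbb{E}_{\mathbf{x}'\sim \hat{\mathcal{P}}_{\mathbf{x}}^{\pi},\,\mathbf{y}'\sim \hat{\mathcal{P}}_{\mathbf{y}}^{\pi}}\bigl[F(\mathbf{x}',\mathbf{y}')\bigr],
\end{equation*}
MICo is precisely $\mathscr{F}^{\pi}U = TU$ for an arbitrary bounded $U$, while SimSR is $\mathscr{F}^{\pi}\overline{\text{cos}}_{\phi} = T\overline{\text{cos}}_{\phi}$ for the particular bounded function $\overline{\text{cos}}_{\phi}\in[0,2]$. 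The target statement therefore reduces to showing that $T$ has a \emph{unique} fixed point on a suitable complete normed space, so that no matter which seed function one feeds in, repeated application converges to the same limit.

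First I would set up the function space. Since rewards are assumed bounded and $\overline{\text{cos}}_{\phi}\in[0,2]$, I would take the Banach space $\bigl(\mathcal{B}(\mathcal{S}\times\mathcal{S}),\|\cdot\|_{\infty}\bigr)$ of bounded real-valued functions on pairs of states. I would verify that $T$ maps this space into itself: the reward term is bounded by assumption, and the expectation of a bounded function is bounded, so $\|TF\|_{\infty}\le \|r^{\pi}\|_{\infty}\cdot 2 + \gamma\|F\|_{\infty}$.

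Next I would establish that $T$ is a $\gamma$-contraction in sup-norm. For any $F,G\in\mathcal{B}(\mathcal{S}\times\mathcal{S})$, the reward terms cancel, leaving
\begin{equation*}
|TF(\mathbf{x},\mathbf{y})-TG(\mathbf{x},\mathbf{y})| \;=\; \gamma\,\bigl|\mathbb{E}_{\mathbf{x}',\mathbf{y}'}[F(\mathbf{x}',\mathbf{y}') - G(\mathbf{x}',\mathbf{y}')]\bigr| \;\le\; \gamma\,\|F-G\|_{\infty},
\end{equation*}
so $\|TF-TG\|_{\infty}\le \gamma\|F-G\|_{\infty}$ with $\gamma<1$. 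By the Banach fixed-point theorem, $T$ admits a unique fixed point $U^{\pi}\in\mathcal{B}(\mathcal{S}\times\mathcal{S})$, and for any starting function $F_0$ the sequence $T^{n}F_0$ converges to $U^{\pi}$ in sup-norm. Applying this with $F_0=U$ recovers the MICo fixed point of Theorem 1; applying it with $F_0=\overline{\text{cos}}_{\phi}$ shows the SimSR iterates converge to the \emph{same} $U^{\pi}$, which is exactly the claim.

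The main conceptual obstacle is really a framing issue rather than a technical one: one must recognize that SimSR is not a different operator from MICo but the same operator restricted to a particular parametric family of initial functions, so the fixed-point identity is immediate from contraction. The only small care needed is to make sure the domain of $T$ genuinely contains $\overline{\text{cos}}_{\phi}$ (which is bounded in $[0,2]$), and that the reward is bounded so $T$ stabilizes the space; both are standard in the BMDP setting adopted in the Preliminaries.
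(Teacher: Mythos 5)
Your proposal is correct and follows essentially the same route as the paper's proof: the paper likewise shows the update is a $\gamma$-contraction in sup-norm (bounding the difference of two iterates by $\gamma$ times the sup-norm of the difference of the seed functions, using that the transition kernel is stochastic) and then invokes the uniqueness of the MICo fixed point because SimSR applies the identical update rule. Your write-up merely makes the Banach fixed-point argument and the boundedness of $\overline{\text{cos}}_{\phi}\in[0,2]$ explicit, which the paper leaves implicit by citing MICo's Proposition 4.2.
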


The second term of the right-hand side in Eq.~\ref{ori_trans} is computed by sample-based approximation to avoid computing W-distance. SimSR operator can be applied iteratively on an arbitrary initialized $\phi$ to update the corresponding representation, leading to a set of mapping functions, which can serve as the encoders to embed the observations into the state representations while satisfying the constraints regarding the cosine distance. 

Theoretically, SimSR operator is built on the extensive prior research on bisimulation-based metrics. We believe that SimSR distinguishes itself from other bisimulation-based methods since the literature usually learns state representation by approximating a well-designed specific metric. Instead, we do not try to approximate metrics, but simply utilize cosine distance as a ``base metric'' and guide the state representation learning to adapt such metric. In principle, different choices can be viewed as different additional restrictions to the states. For example, MICo distance restricts the measurement between states being diffuse metric, SimSR restricts the state feature to be embedded in a unit sphere. The motivations and benefits of the state representation that learned from SimSR operator are outlined below:

\subsubsection{\textit{Unit length}}  With the cosine distance as the base ``metric'' in SimSR, all learned state features will be scaled to unit length. Without normalization, the distance may be dominated by the scale of state features, resulting in negative feedback to state representations and disturbance to value approximation and policy learning. Cosine distance effectively restricts the output space to the unit sphere and guarantees that all features contribute equally to the resulting distance. \citet{DBLP:conf/icml/BojanowskiJ17,DBLP:conf/mm/WangXCY17,DBLP:conf/icml/0001I20} demonstrated that normalized outputs lead to superior representations in their representation learning work. Our experiments also proved that the unified features perform better than those that not unified.

\subsubsection{\textit{Zero self-distance}} 
``Representation collapse'' in RL means that two states (observations) with different values are collapsed to the same representation, e.g., $\phi(\mathbf{x})\rightarrow 0$ for all $\mathbf{x}$. State representations are prone to representation collapse when they are jointly trained with value functions and policy functions without providing ground states. Let $\phi^{\pi}$ be the mapping function set corresponding to the fixed point. With the zero self-distance property guaranteed by cosine distance, we can assure that collapse issue is negligible for $\phi^{\pi}$ as two observations are encoded with the same representation only if they have same values. More detailed discussion is provided in Appendix.

\subsubsection{\textit{Low computational complexity}}
SimSR operator $\mathscr{F}^{\pi}$ does not compute Wasserstein distance, which drastically simplifies the update procedure and results in the same order of computational complexity with MICo operator.

\subsection{SimSR with latent state dynamics}

The target problems of the paper can be modeled as a block MDP with an infinite observation space and a finite latent state space, where the dynamics can be described by either latent state transition function $\mathcal{P}$ or observation transition function $\hat{\mathcal{P}}$. SimSR operator defined in Theorem~\ref{theorem:SimSR_op} is computed by sample-based approximation with the observations sampled from $\hat{\mathcal{P}}$ which is provided by the raw environment. Although sampling from an infinite observation space is convenient, this strategy may inject considerable redundant information to the representation learning and the policy learning process. Therefore, we explicitly construct a more compact dynamics model on the limited latent state space. Consequently, we redefine SimSR operator with the latent state dynamics. The learned latent state dynamics are expected to provide agents more diverse knowledge of the goals, leading to better convergence and robustness in the representation learning and policy learning.

Specifically, we first construct the latent state dynamics model $\mathcal{P}(\cdot|\phi(\mathbf{x}),\mathbf{a})$, then encode the observation $\mathbf{x}$ to its latent state $\phi(\mathbf{x})$ and sample the corresponding next latent state from the dynamics model $\mathbf{s}'\sim \mathcal{P}(\cdot|\phi(\mathbf{x}),\mathbf{a})$, and finally leverage cosine distance to acquire SimSR distance. Theorem~\ref{simsr_with_dynamics} defines the updated operator and proves its guaranteed convergence.

\begin{theorem}
\label{simsr_with_dynamics}
Given a policy $\pi$, let SimSR operator $\mathbbm{F}^{\pi}:\mathbb{R}^{\mathcal{S}\times \mathcal{S}}\rightarrow\mathbb{R}^{\mathcal{S}\times \mathcal{S}}$ be
\begin{equation}
\begin{aligned}
    \label{con_trans}
    \mathbbm{F}^{\pi}\overline{\text{cos}}_{\phi}(\mathbf{x},\mathbf{y})=&|r_{\mathbf{x}}^{\pi}-r_{\mathbf{y}}^{\pi}|+\gamma \mathbb{E} _{\mathbf{s}'\sim \mathcal{P}_{\phi(\mathbf{x})}^{\pi},\mathbf{u}'\sim \mathcal{P}_{\phi(\mathbf{y})}^{\pi}}[\overline{\text{cos}}(s',u')].
\end{aligned}
\end{equation}
If latent dynamics are specified, $\mathbbm{F}^{\pi}$ has a fixed point.
\end{theorem}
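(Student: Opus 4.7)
The plan is to prove existence of a fixed point via the Banach contraction mapping theorem, in direct analogy with the argument that underlies Theorem~\ref{theorem:MICo_dis}. The hypothesis ``if latent dynamics are specified'' is precisely what turns $\mathbbm{F}^{\pi}$ into a bona fide self-map: once $\phi$ (hence $\mathcal{P}(\cdot\mid\phi(\mathbf{x}),\mathbf{a})$) is fixed, the sampling distributions on the right-hand side of Eq.~\ref{con_trans} become well-defined and independent of the argument being iterated.

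First I would recast the update as a self-map on the Banach space $\mathcal{B}=\{U\colon\mathcal{X}\times\mathcal{X}\to\mathbb{R}\mid\|U\|_\infty<\infty\}$ equipped with the sup-norm, by reading Eq.~\ref{con_trans} as
\[
(\mathbbm{F}^{\pi}U)(\mathbf{x},\mathbf{y}) = |r_{\mathbf{x}}^{\pi}-r_{\mathbf{y}}^{\pi}| + \gamma\,\mathbb{E}_{\mathbf{s}'\sim\mathcal{P}^{\pi}_{\phi(\mathbf{x})},\,\mathbf{u}'\sim\mathcal{P}^{\pi}_{\phi(\mathbf{y})}}\bigl[U(\mathbf{s}',\mathbf{u}')\bigr],
\]
so that the formula in the statement corresponds to the initial choice $U=\overline{\text{cos}}$. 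Under the standard assumption that rewards are bounded, the reward-difference term is bounded by $2\|r\|_\infty$ and the expectation by $\|U\|_\infty$, so $\mathbbm{F}^{\pi}U\in\mathcal{B}$ whenever $U\in\mathcal{B}$.

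Next I would verify the $\gamma$-contraction property. For any $U,V\in\mathcal{B}$ the reward-difference terms cancel, leaving
\[
\bigl|(\mathbbm{F}^{\pi}U)(\mathbf{x},\mathbf{y})-(\mathbbm{F}^{\pi}V)(\mathbf{x},\mathbf{y})\bigr| = \gamma\,\bigl|\mathbb{E}\bigl[U(\mathbf{s}',\mathbf{u}')-V(\mathbf{s}',\mathbf{u}')\bigr]\bigr| \le \gamma\|U-V\|_\infty.
\]
Taking the supremum over $(\mathbf{x},\mathbf{y})$ gives $\|\mathbbm{F}^{\pi}U-\mathbbm{F}^{\pi}V\|_\infty\le\gamma\|U-V\|_\infty$, and since $0<\gamma<1$ the operator is a strict contraction. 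Banach's fixed point theorem then delivers a unique $U^{*}\in\mathcal{B}$ with $\mathbbm{F}^{\pi}U^{*}=U^{*}$.

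The main obstacle is interpretive rather than computational: making precise what is fixed under $\mathbbm{F}^{\pi}$, given the encoder-dependent symbol $\overline{\text{cos}}_{\phi}$ on the left of Eq.~\ref{con_trans}. My reading is that $\overline{\text{cos}}_{\phi}$ only identifies the initial iterate, while the operator itself is a self-map on bounded functions, at which point the argument closes cleanly. A minor subtlety worth spelling out is that the sampled next states $\mathbf{s}',\mathbf{u}'$ live in the latent space $\mathcal{S}$, so one may equivalently work on $\mathcal{S}\times\mathcal{S}$ and transport back via the deterministic projection $\phi$; either formulation gives the same contraction estimate.
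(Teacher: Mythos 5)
Your proof is correct and follows essentially the same route as the paper: the paper's proof of this theorem substitutes $\mathbf{s}'=\phi(\mathbf{x}')$, $\mathbf{u}'=\phi(\mathbf{y}')$ and then reuses the proof of Theorem 2, which is exactly the $\gamma$-contraction estimate in the sup norm followed by an appeal to the contraction-mapping fixed point. Your version is somewhat more explicit (working on the full Banach space of bounded functions and invoking Banach's theorem directly, rather than only comparing encoder-induced iterates $\overline{\text{cos}}_{\phi},\overline{\text{cos}}_{\phi'}$), but the underlying argument is the same.
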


\subsection{Latent state dynamics modeling}

Following~\cite{DBLP:conf/iclr/0001MCGL21}, we develop an ensemble version of probabilistic dynamics $\{\mathcal{P}_k(\cdot|\phi(\mathbf{x}),\mathbf{a})\}^K_{k=1}$. Typically, each probabilistic transition dynamics model can be represented as $\mathcal{P}_{\theta_k}(\cdot|\phi(\mathbf{x}),\mathbf{a})=\mathcal{N}(\mu_{\theta_k},\sigma_{\theta_k})$. At the training step, we update the parameters of dynamics models by Gaussian negative log-likelihood loss function:
\begin{equation}
\label{eq:dynamics_loss}
    \mathcal{L}_{\mathcal{P}}(\theta_i)=\frac{1}{K}\sum_{k=1}^{K}\left[\frac{\log \sigma_{\theta_k}^{2}(\phi(\mathbf{x}))}{2}+\frac{\left(\phi(\mathbf{x}')-\mu_{\theta_k}(\phi(\mathbf{x}))\right)^{2}}{2 \sigma_{\theta_k}^{2}(\phi(\mathbf{x}))}\right],
\end{equation}
where $i\in\{1,2,...,K\}$. Since the probabilistic models are all randomly initialized, though they share the same gradient, they generally acquire different parameters after training. Therefore, the ensemble model can estimate the uncertainty of the environments in some sense and is more suitable for approximating the MDPs that with a high degree of stochasticity. At the inference step, to apply theorem 4, we randomly sample one of the $K$ probabilistic dynamics to perform the sampling of next latent state $\mathbf{s}'$ and $\mathbf{u}'$.

\subsection{Representation Learning with SimSR loss}

We adopt two convolution layer with a fully connected layer as the encoder module $\phi$. After encoding the observations, we then $\ell_2$-normalize the output of the fully connected layer to scale the features to unit length. 

To avoid the computation of the expected distance in Eq.~\ref{ori_trans} and Eq.~\ref{con_trans} regarding the probabilistic models, 
we, inspired by temporal-difference learning~\cite{DBLP:books/lib/SuttonB98}, estimate the target of the distance (the R.H.S of these two equations) by stochastic estimation methods : a sample distance between next latent states is used in place of the real expected distance. However, as we update the estimation of the distance between states based on estimating the distance of successor states, such bootstrapping steps may introduce bias and reduce the representations' consistency. Therefore, we adopt the momentum encoder~\cite{DBLP:conf/cvpr/He0WXG20} to stabilize the representation of the states.

Specifically, we draw batch of state pairs and minimise the mean square error (MSE) between both sides of SimSR to guide the learning of the encoder:

\begin{equation}
\begin{aligned}
    \mathcal{L}(\phi)=&\mathbb{E}_{(\mathbf{x},r(\mathbf{x},\mathbf{a}),\mathbf{a},\mathbf{x}'),(\mathbf{y},r(\mathbf{x},\mathbf{a}), \mathbf{a},\mathbf{y}')\sim \mathcal{D}}\big(\\&\overline{\text{cos}}(\phi(\mathbf{x}),\phi(\mathbf{y}))-\text{Target}\big)^2
    \label{eq:mse_loss}
\end{aligned}
\end{equation}
where $\mathcal{D}$ is the replay buffer and:
\begin{equation}
\begin{aligned}
    \text{Target}=\begin{cases}&|r_{\mathbf{x}}^{\pi}-r_{\mathbf{y}}^{\pi}|+\gamma \overline{\text{cos}}(\hat{\phi}(\mathbf{x}'),\hat{\phi}(\mathbf{y}')), \\& \text {if applying Theorem 2.} \\ &|r_{\mathbf{x}}^{\pi}-r_{\mathbf{y}}^{\pi}|+\gamma \overline{\text{cos}}(\mathbf{s}',\mathbf{u}')
    \\& \text { if applying Theorem 3.}\end{cases}
\end{aligned}
\label{eq:momentum_two_types}
\end{equation}

In Eq.~\ref{eq:momentum_two_types}, $\mathbf{s}'\sim\mathcal{P}_{i}(\cdot|\hat{\phi}(\mathbf{x}),\mathbf{a})$, $\mathbf{u}'\sim\mathcal{P}_{i}(\cdot|\hat{\phi}(\mathbf{y}),\mathbf{a})$,  $i$ is sampled from uniform distribution $U(1,K)$, and the momentum encoder $\hat{\phi}$ is the exponential moving average (EMA) of the encoder $\phi$:
\begin{equation}
    \hat{\phi} \leftarrow m \hat{\phi}+(1-m) \phi,
\end{equation}
where $m\in[0,1)$ is the momentum coefficient.

\begin{figure*}[h]
\centering
\subfigure[Cartpole Swingup]{
\includegraphics[width=0.18\textwidth]{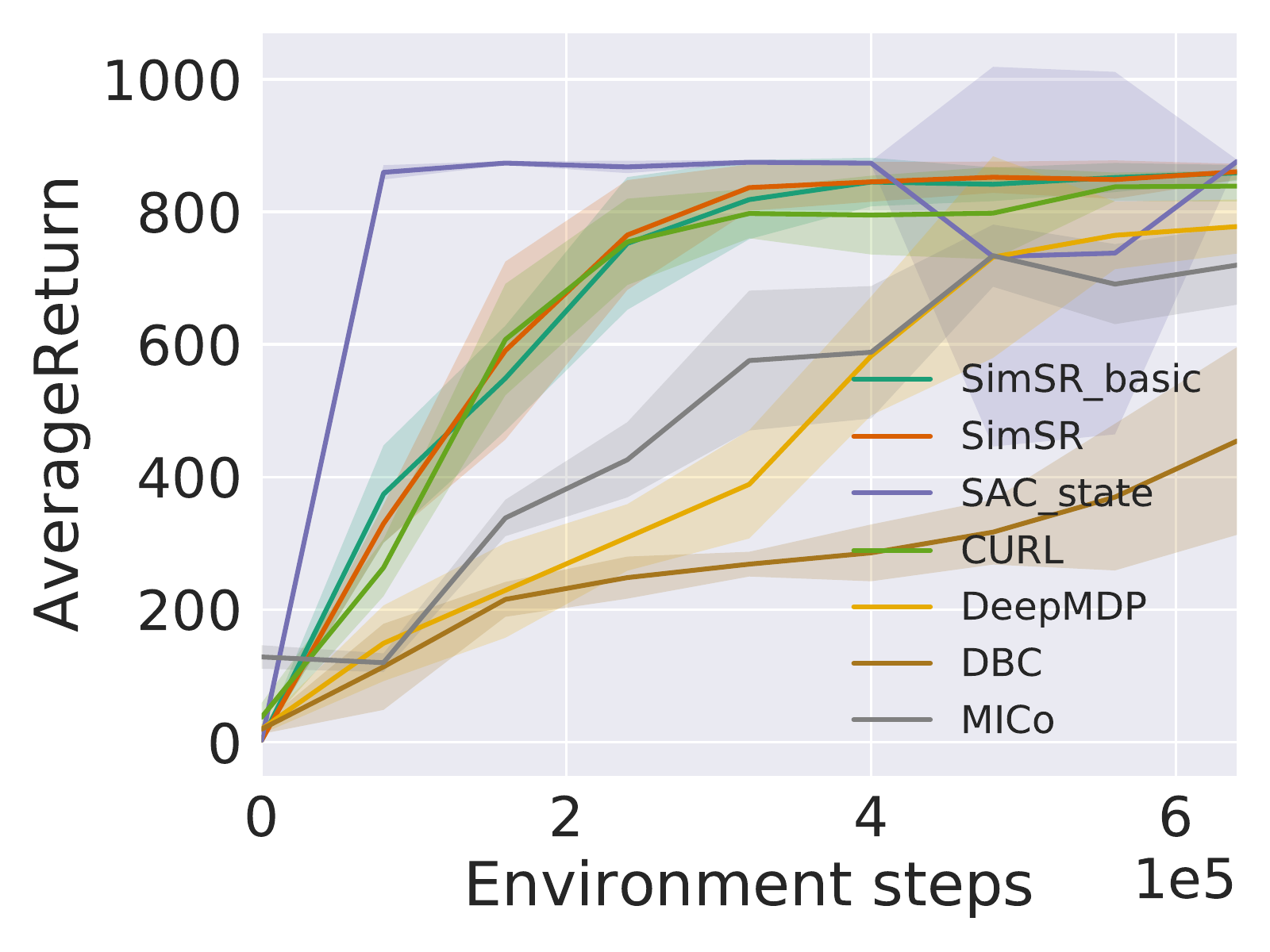}
}
\subfigure[Cheetah Run]{
\includegraphics[width=0.18\textwidth]{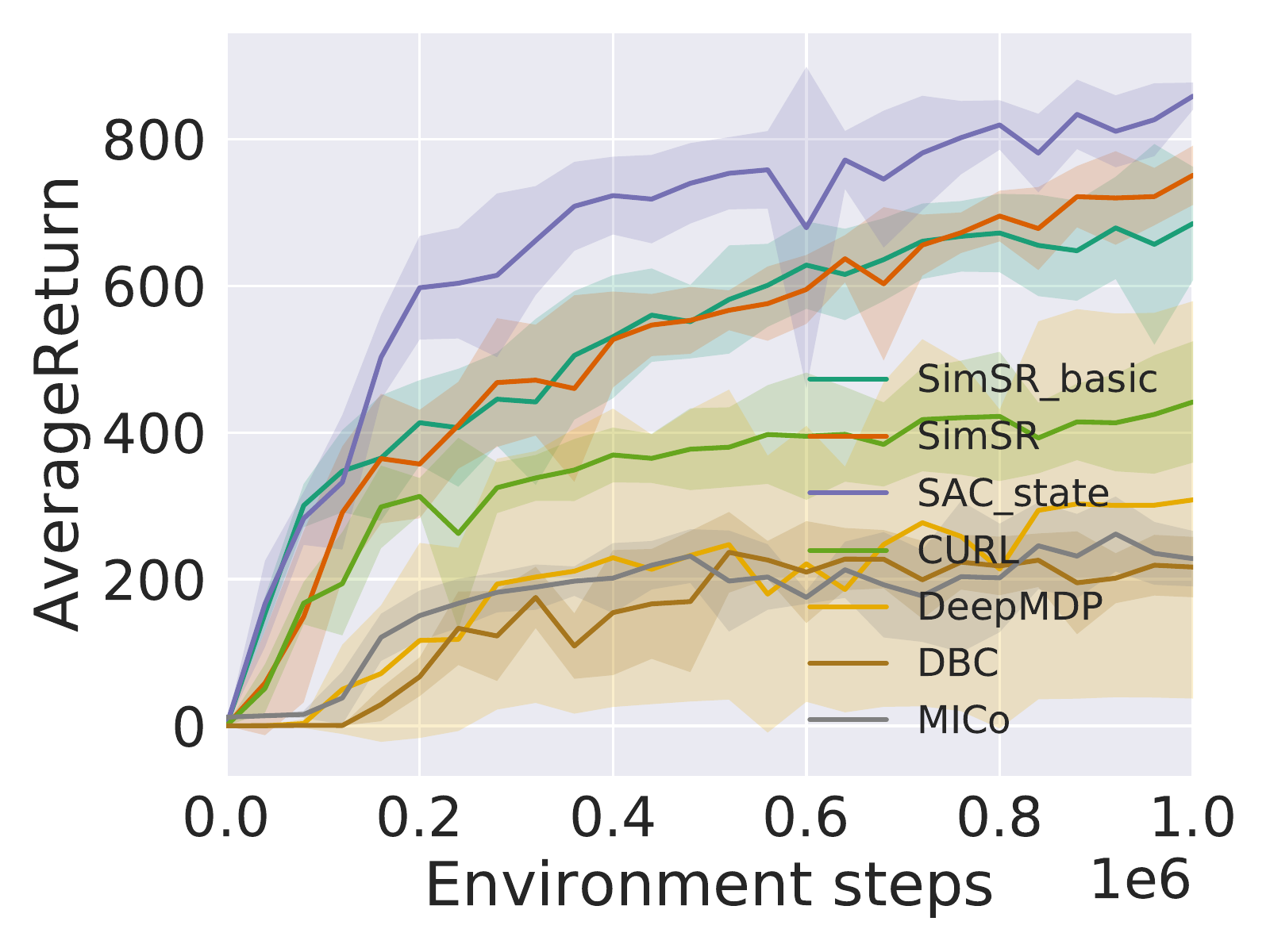}
}
\subfigure[Hopper Hop]{
\includegraphics[width=0.18\textwidth]{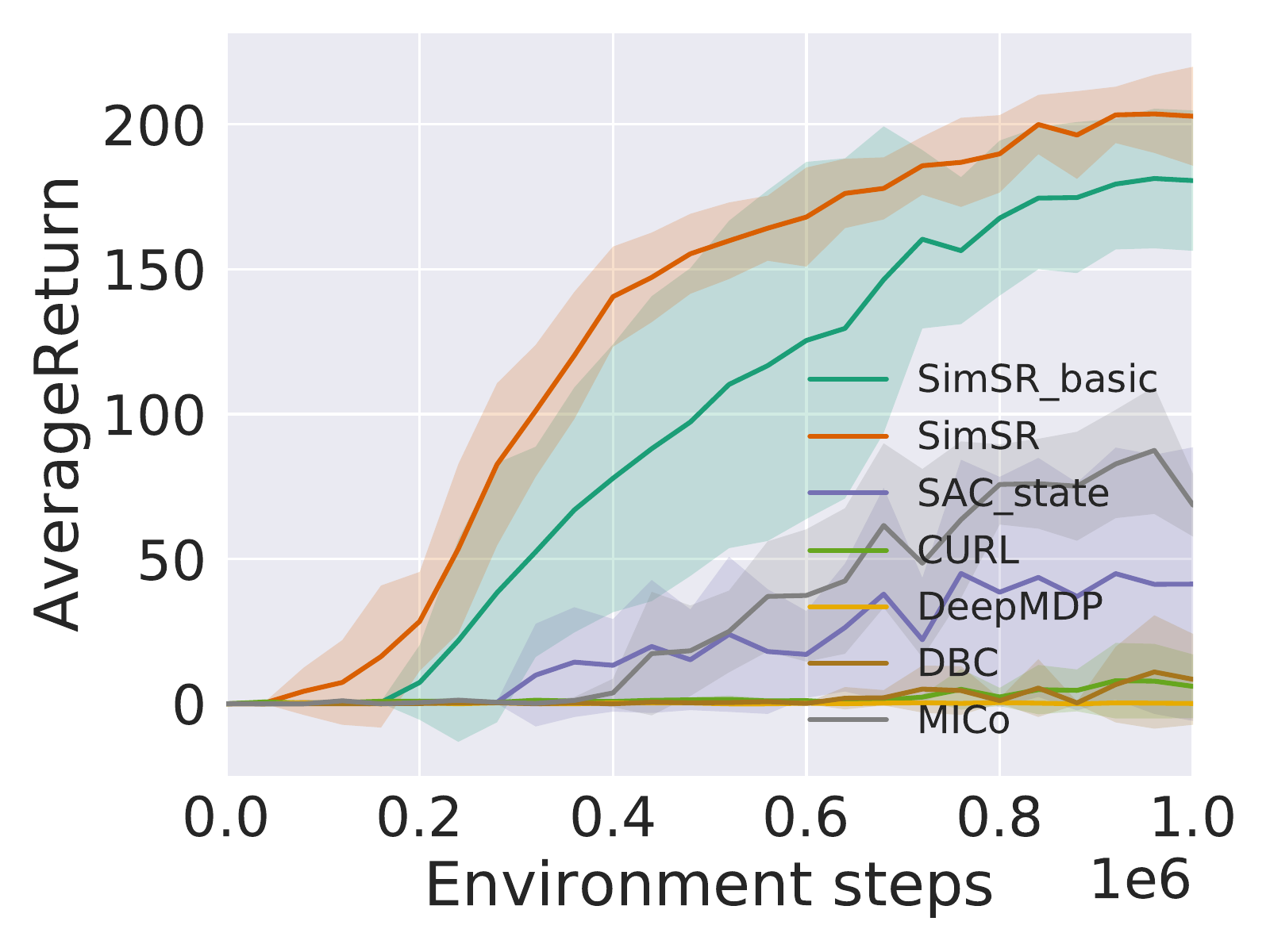}
}
\subfigure[Hopper Stand]{
\includegraphics[width=0.18\textwidth]{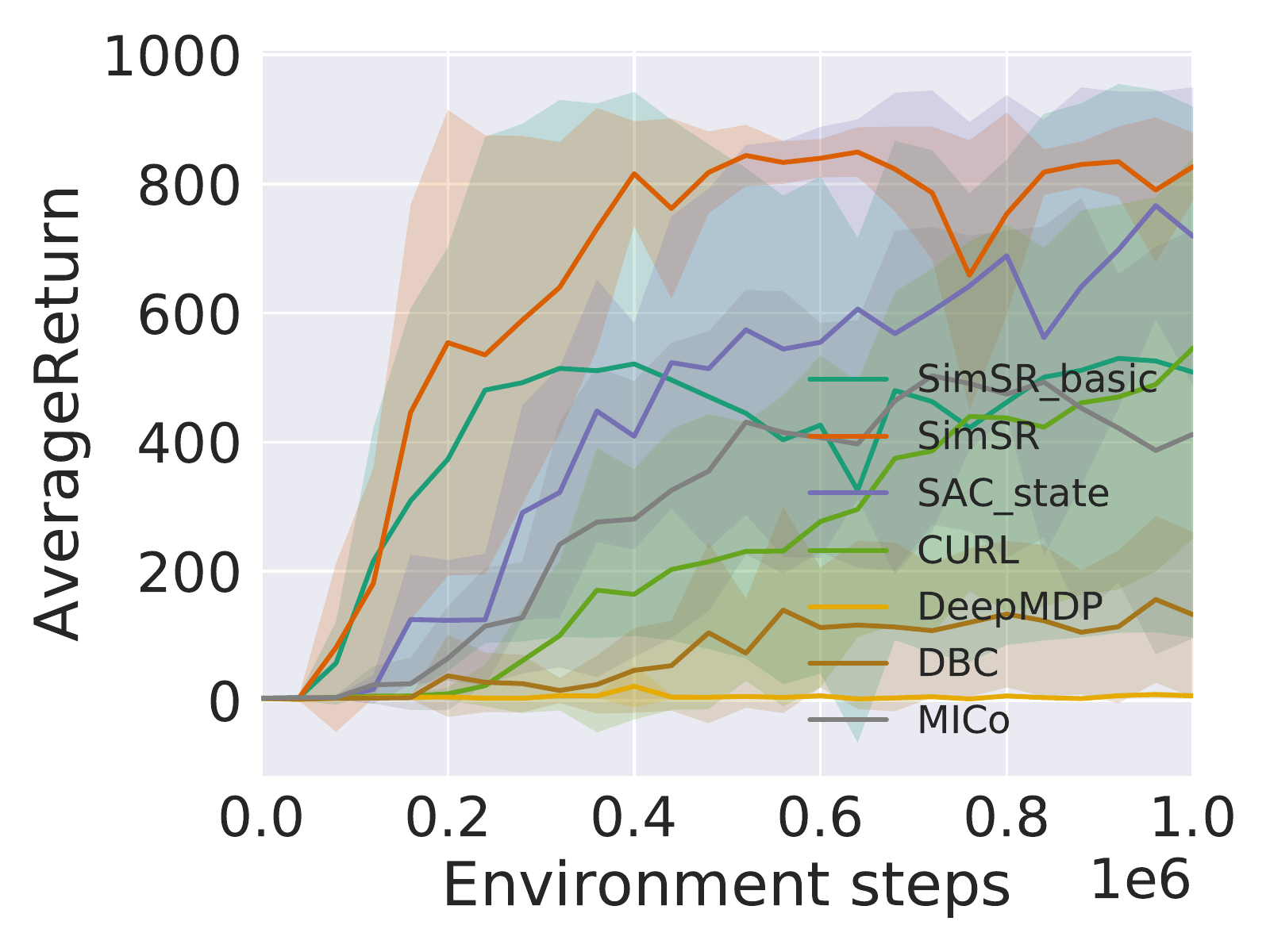}
}
\subfigure[Finger Spin]{
\includegraphics[width=0.18\textwidth]{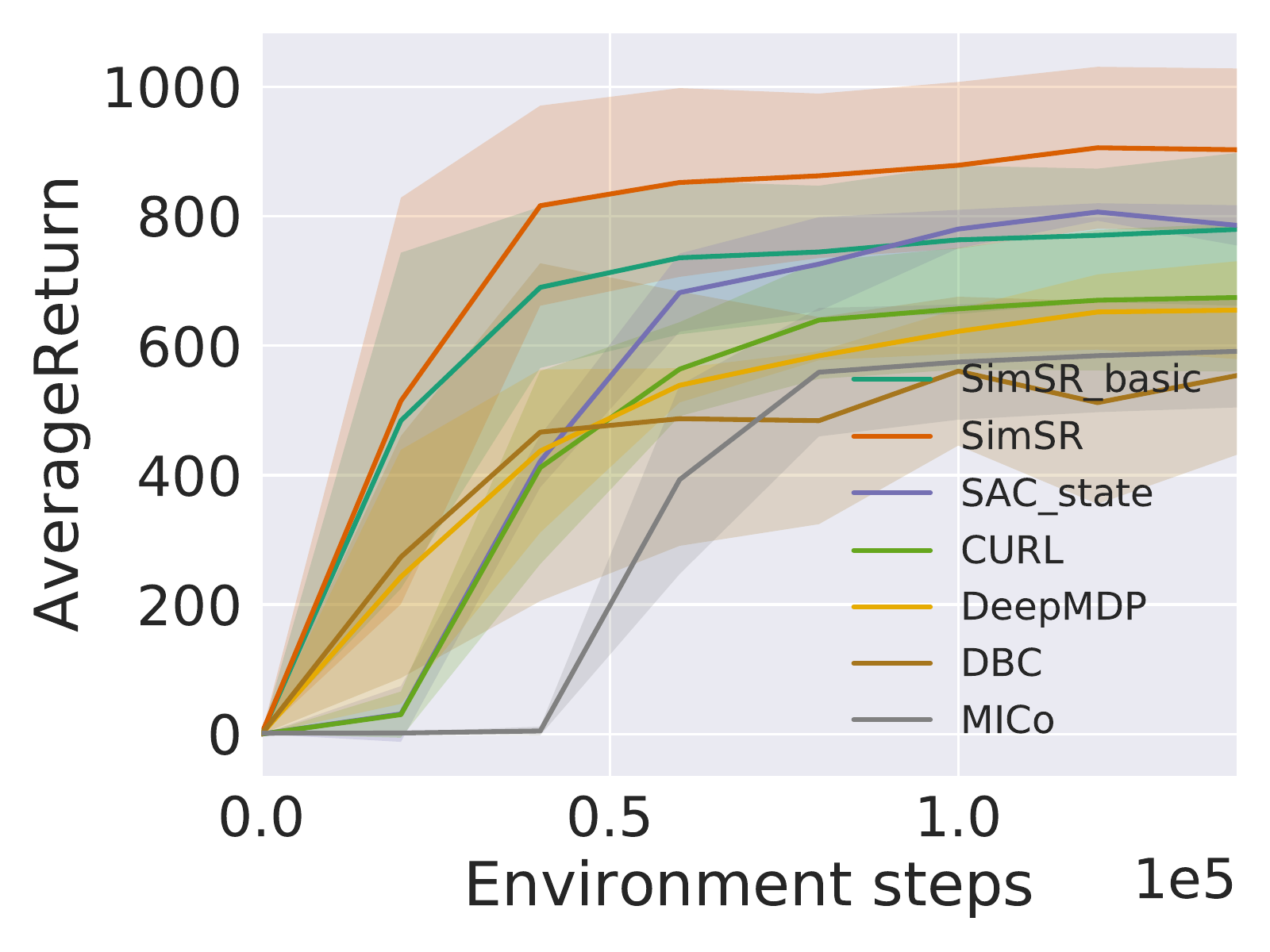}
}
\caption{Performance comparison on 5 DMC tasks over 5 seeds with one standard error shaded in the default setting. For each seed, the average return is computed every 10,000 training steps, averaging over 10 episodes. The horizontal axis indicates the number of environment steps. The vertical axis indicates the average return. }
\label{fig/standard_curve}
\end{figure*}

Technically, as both $\ell_1$ distance (the reward difference) and cosine distance (next state difference) can be easily extended to matrix operation, we can measure the cross-correlation matrix between the batch-wise state representations and optimize it to accelerate the representation learning. With Theorem~\ref{theorem:SimSR_op}, the loss function can be rewritten as:
\begin{equation}
\label{eq:batchwise_mse_loss}
\begin{aligned}
    \mathbb{E}_{B_j}[\mathcal{L}(\phi)] = &
    \mathbb{E}_{B_j}\bigg[(1-\frac{\phi(X)^T\cdot\phi(X)}{\|\phi(X)\|\cdot\|\phi(X)\|}) - \\&(|R^T-R|+\gamma (1-\frac{\hat{\phi}(X')^T\cdot\hat{\phi}(X')}{\|\hat{\phi}(X')\|\cdot\|\hat{\phi}(X')\|}))\bigg]^2,
\end{aligned}
\end{equation}
where $B_j$ is a sample batch, $X$, $X'$, and $R$ are the corresponding batch-wise observations, batch-wise next observations, and batch-wise rewards, respectively. In contrast, DBC~\cite{DBLP:conf/iclr/0001MCGL21} develops its loss function at sample-wise level, resulting in lower computational efficiency. 

The proposed representation learning approach can be viewed as an auxiliary task with SimSR objective to learn an encoder, on top of which the policy and value network can be easily built. Meanwhile, since the encoder, the policy network, and the value network can be trained simultaneously, our approach can be easily adopted to any model-based or model-free RL methods as an add-on.
In practice, we build agents by combining our approach with soft actor critic (SAC) algorithm~\cite{DBLP:journals/corr/abs-1812-05905} to devise a practical reinforcement learning method. The training process is illustrated by Algorithm~\ref{alg:simsr}. 
\renewcommand\algorithmiccomment[1]{\hfill $\triangleright$ #1}

\begin{algorithm}[H]
    \small
    \begin{algorithmic}[1]
    \floatname{algorithm}{Procedure}
    \FOR {Time $t = 0$ to $\infty$}
    \STATE{Encode state $\textbf{s}_t = \phi(\textbf{x}_{t})$}
    \STATE{Execute action $\textbf{a}_t \sim \pi(\textbf{s}_t)$}
    \STATE{Record data: $\mathcal{D} \leftarrow \mathcal{D} \cup \{\textbf{x}_{t}, \textbf{a}_t, \textbf{x}_{t+1}, r_{t+1}\}$}
    \STATE{Sample batch $B_i\sim\mathcal{D}$}
    \STATE{Train encoder: $\mathbb{E}_{B_i}[\mathcal{L}(\phi)]$ \algorithmiccomment{Eq.~\ref{eq:batchwise_mse_loss}}}
    \STATE{Train dynamics: $\mathbb{E}_{B_i}[\mathcal{L}_{\mathcal{P}}(\theta_j)]$, where $j\in\{1,...,K\}$ \algorithmiccomment{Eq.~\ref{eq:dynamics_loss}}}
    \STATE{Train value function: $\mathbb{E}_{B_i}[\mathcal{L}(V)]$ \algorithmiccomment{SAC algorithm}}
    \STATE{Train policy function: $\mathbb{E}_{B_i}[\mathcal{L}(\pi)]$ \algorithmiccomment{SAC algorithm}}
    \ENDFOR
    \end{algorithmic}
    \caption{SimSR algorithm}
    \label{alg:simsr}
\end{algorithm}

We notice that in some tasks, the critic loss explodes after few iterations and the value function diverges. This happens when agents start learning new behaviors and experiencing new states where the scale of MSE loss may increase exponentially large. Since both representation learning and reinforcement learning use the MSE loss, the combination of them aggravates the volatility, thus causing the explosion. Therefore, we use Huber loss to alleviate the explosion problem.

The most related methods to our work are DBC~\cite{DBLP:conf/iclr/0001MCGL21}, MICo~\cite{castro2021mico}, and DeepMDP~\cite{DBLP:conf/icml/GeladaKBNB19}. Table~\ref{tab/compare} provides the comparison of their key features. Detailed comparison with previous work is provided in Appendix.

\begin{table}
\caption{Comparison of key features in different algorithms. ``Dis. consistency'' means that the distance computed in the representation space is consistent with the base ``metric'' that is used in behavioral difference. ``-'' means ``does not apply''.}
\label{tab/compare}
\centering
\scriptsize
\begin{tabular}{ c c c c c c} 
 \toprule
    & Dis. consist & Unit length & Zero self-dis. &  Learn dynamics \\
 \midrule
SimSR & \cmark & \cmark &  \cmark &  \cmark \\
DBC & \xmark & \xmark & \cmark & \cmark \\
MICo & \cmark & \xmark & \xmark & \xmark \\
DeepMDP & - & \xmark& - & \cmark \\
 \bottomrule
\end{tabular}
\end{table}

\section{Experiments}
We experimented to investigate the following questions: (1) In comparison with state-of-the-art algorithms, does SimSR have a better performance in terms of sample efficiency? (2) Can SimSR learn robust state representation? (3) How is the generalization performance of the learned representation? 

Accordingly, we first evaluated our method in several standard control tasks from the DeepMind control (DMC) suite ~\cite{DBLP:journals/corr/abs-1801-00690}. We then evaluated the robustness of our method to test if it can handle more realistic and complicated scenarios, where we replaced the background of the environment with natural video as distractors. Finally, we tested the generalization performance of the state representation in unseen tasks.

\begin{figure}[htbp]
\centering
\subfigure{
\includegraphics[width=0.11\textwidth]{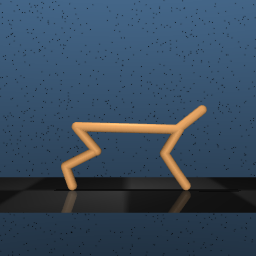}
}
\subfigure{
\includegraphics[width=0.11\textwidth]{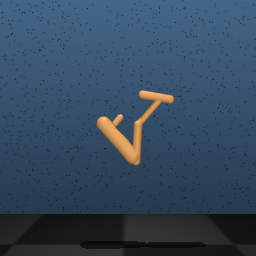}
}
\subfigure{
\includegraphics[width=0.11\textwidth]{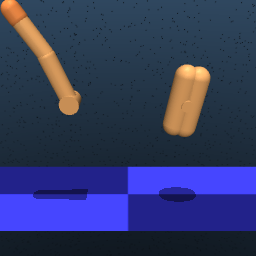}
}
\subfigure{
\includegraphics[width=0.11\textwidth]{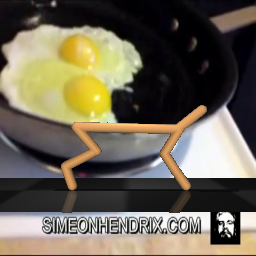}
}
\subfigure{
\includegraphics[width=0.11\textwidth]{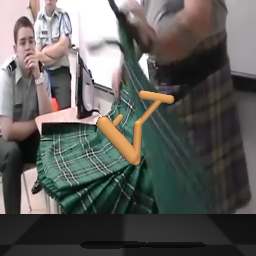}
}
\subfigure{
\includegraphics[width=0.11\textwidth]{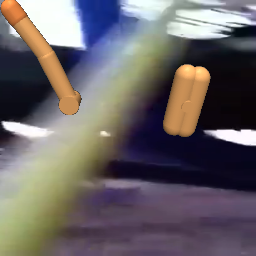}
}
\caption{Pixel observations in DMC in the default setting (top row) and in the natural video background settings (bottom row) for cheetah (left column), hopper (middle column), and finger (right column).
}
\label{fig/env_settings}
\end{figure}

\subsection{Effectiveness and robustness}
\subsubsection{Settings}
We compared the proposed approaches, SimSR\_basic (Theorem 2) and SimSR (Theorem 4), against the selected baselines. As one goal is to close the performance gap between the physical-state based methods and images based methods, we first compared with SAC\_state, which is non-visual SAC~\cite{DBLP:journals/corr/abs-1812-05905} accepting physical features (e.g., position, angle, height) as inputs. 
We then compared with CURL~\cite{DBLP:conf/icml/LaskinSA20}, which is a recent state-of-the-art image-based RL method that learns latent representation via data augmentation and contrastive learning. 
Thereafter, we compared with the approaches that are most relevant to ours, including DBC~\cite{DBLP:conf/iclr/0001MCGL21}, MICo~\cite{castro2021mico}, and DeepMDP~\cite{DBLP:conf/icml/GeladaKBNB19}\footnote{The results of DBC and DeepMDP are taken from \cite{DBLP:conf/iclr/0001MCGL21} with five seeds.  Our code is available at https://github.com/bit1029public/SimSR}. Due to the page limit and the absence of some results of DBC and DeepMDP, we provide more comprehensive performance comparisons on \texttt{Ball\_In\_Cup\_Catch}, \texttt{Pendulum\_Swingup}, and additional environments in Appendix. We keep all hyper-parameters of the algorithm fixed throughout experiments except the action repeat which follows the convention to ensure a fair comparison. The settings of all hyper-parameters and architectures are also provided in Appendix.

\subsubsection{Experiments with default setting}
As shown in the top row of Figure~\ref{fig/env_settings}, the default setting, which is provided by DMC, has simple backgrounds for the pixel observations.  Figure~\ref{fig/standard_curve} demonstrates that SimSR outperforms all selected state-of-the-art methods by a large margin on 3 of 5 tasks (\texttt{Finger\_Spin}, \texttt{Hopper\_Hop}, \texttt{Hopper\_Stand}) and remains competitive in the rest of 2 tasks. As discussed, MICo may encounter the representation collapse problem where different observations are encoded to the same embedding. Since the policy network and the value network are built on top of the encoder, they both may collapse and result in inaccurate learning in MICo. 

\begin{figure}[tbp]
\centering
\subfigure[Cheetah Run]{
\includegraphics[width=0.14\textwidth]{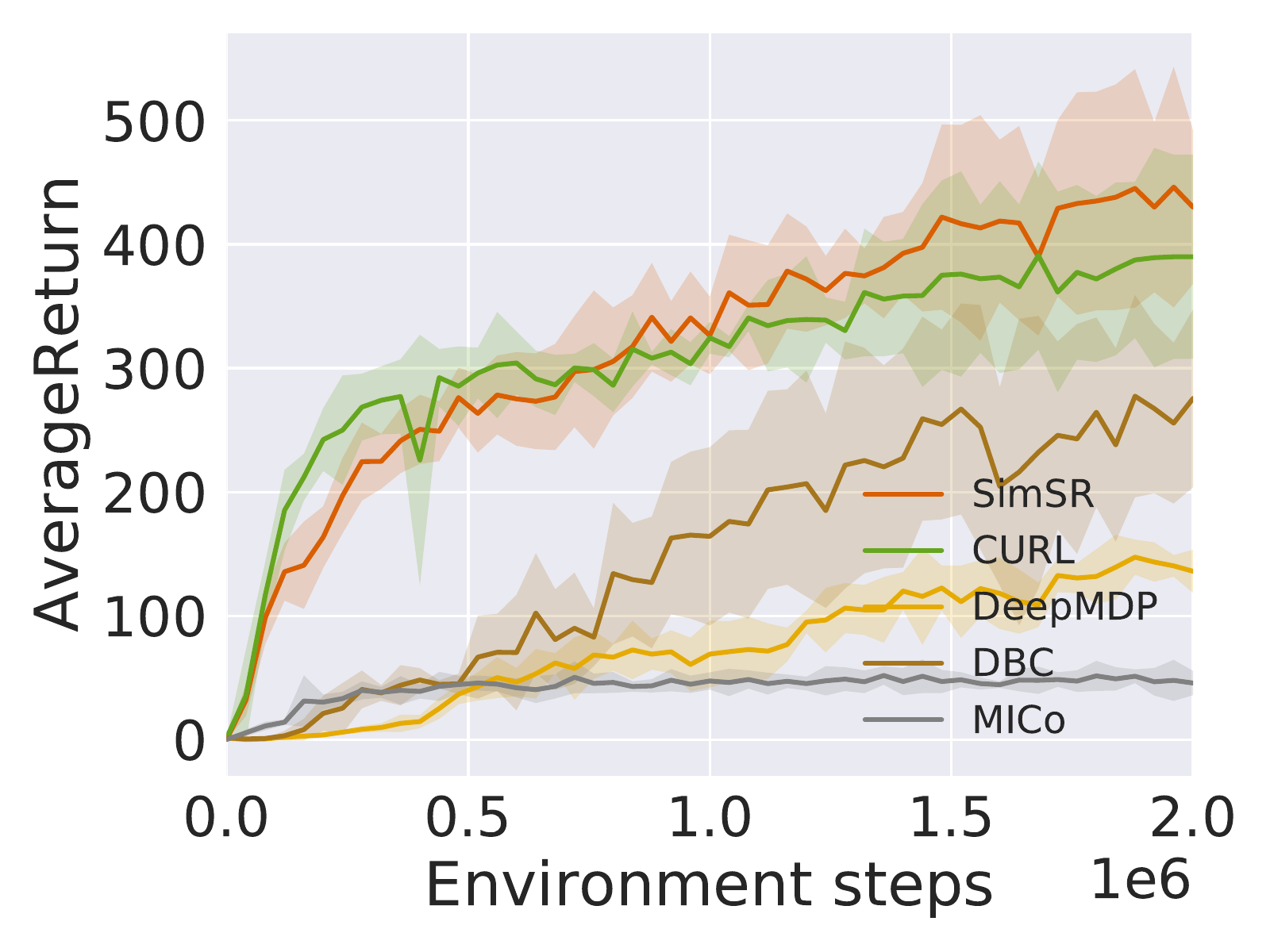}
}
\subfigure[Hopper Stand]{
\includegraphics[width=0.14\textwidth]{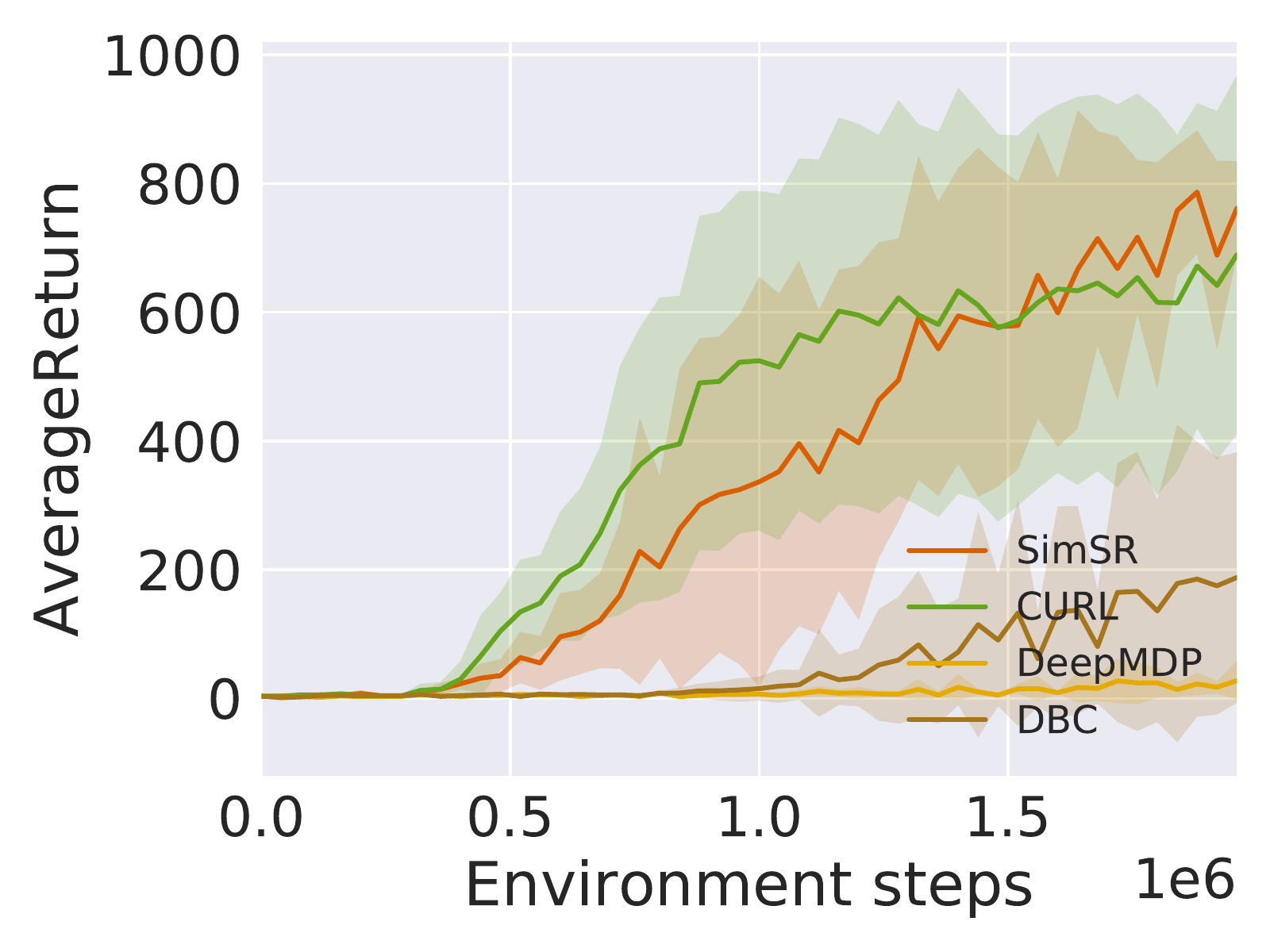}
}
\subfigure[Finger Spin]{
\includegraphics[width=0.14\textwidth]{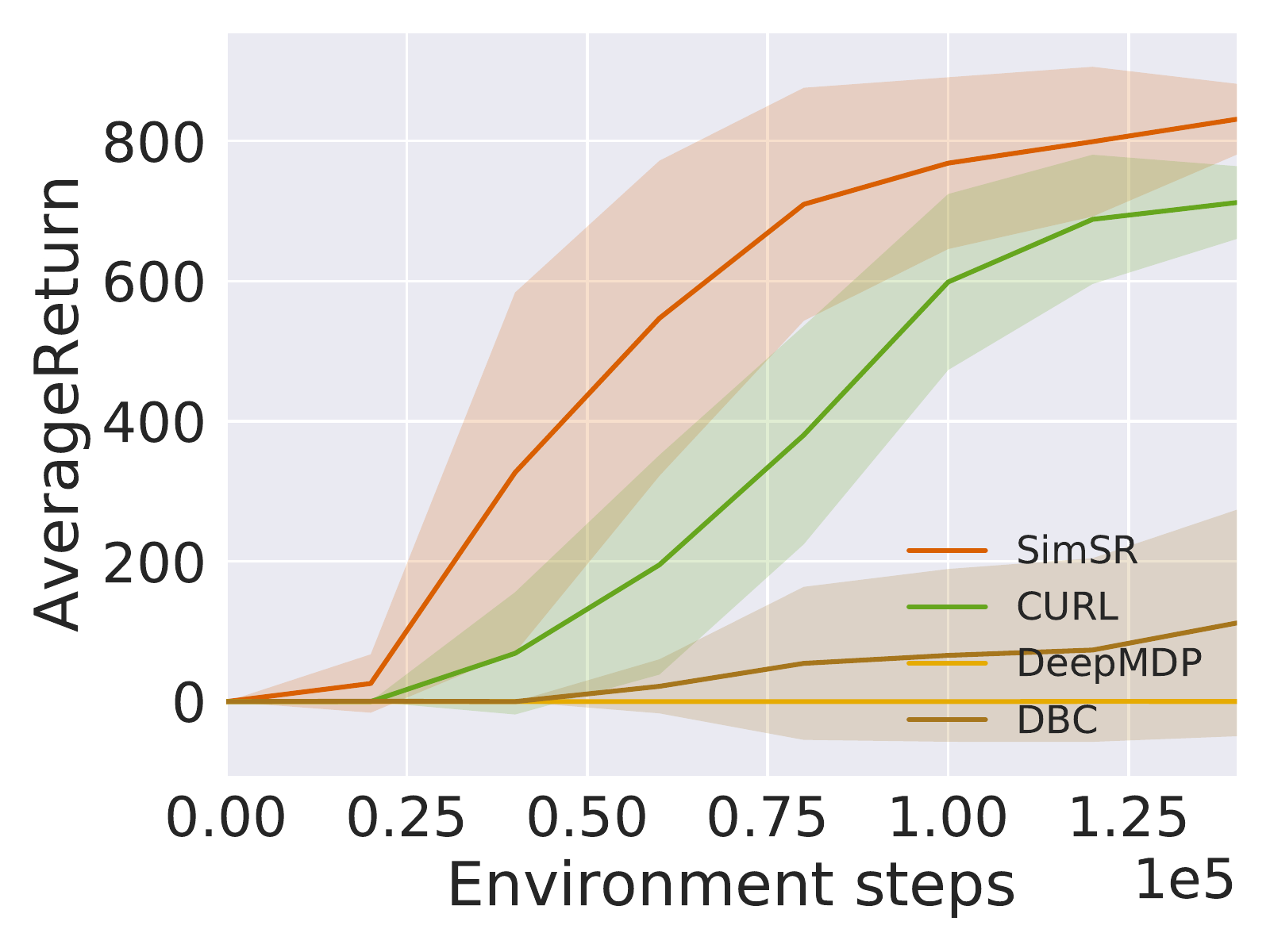}
}
\caption{Performance comparisons on 3 DMC tasks over 5 seeds with one standard error shaded in the natural video setting.}
\label{fig:natural}
\end{figure}

The results prove that SimSR can generally close the gap between image observation based RL methods and physical state based RL methods regarding the downstream performance without any data augmentation techniques. Moreover, in comparison with the approaches with bisimulation metric, SimSR consistently outperforms MICo, DBC, and DeepMDP on all tasks, showing the effectiveness of the proposed SimSR distance. SimSR performs better than SimSR\_basic in most cases, confirming that learning latent transitions can provide agents more diverse knowledge of the goals and achieve better sample efficiency.

\subsubsection{Experiments with natural video setting} 
To investigate the robustness of representation learning against task-irrelevant information, we used natural videos from the Kinetics dataset~\cite{DBLP:journals/corr/KayCSZHVVGBNSZ17}, as shown in the bottom row of Figure~\ref{fig/env_settings}, to substitute the default simple background in the experiments. Figure~\ref{fig:natural} demonstrates that SimSR is capable of filtering out task-irrelevant information, thus achieving better performance against other bisimulation-based methods.

Since CURL combines contrastive learning with data augmentation, its contrastive loss incorporates the prior knowledge of the environment into the encoder. Therefore, CURL is able to distinguish task irrelevant signals in visual spaces, thus having better performance than other methods without encoding prior knowledge. Still, SimSR outperforms CURL.

\subsection{Generalization experiments}

To evaluate the generalization capability, we tested the learned representation on different tasks, including \texttt{Walker\_Walk}, \texttt{Walker\_Stand}, and \texttt{Walker\_Run}, which share the same observation space but have different reward functions.

We first trained SimSR on \texttt{Walker\_Walk} until convergence to learn the base encoder. To test its generalization capability, We then trained two SAC agents with image-based inputs on \texttt{Walker\_Stand} and \texttt{Walker\_Run}, respectively. One SAC agent utilized the base encoder learned in \texttt{Walker\_Walk} without further updates during the training process, while the other SAC agent learned its own encoder from scratch. For comparison, we also trained SimSR agents in both tasks to represent the specialized encoders in each task. Figure~\ref{fig/generalization} illustrates the learning curves of all agents in both tasks.

\begin{figure}[t]
\centering
\subfigure[Walker Stand]{
\includegraphics[width=0.21\textwidth]{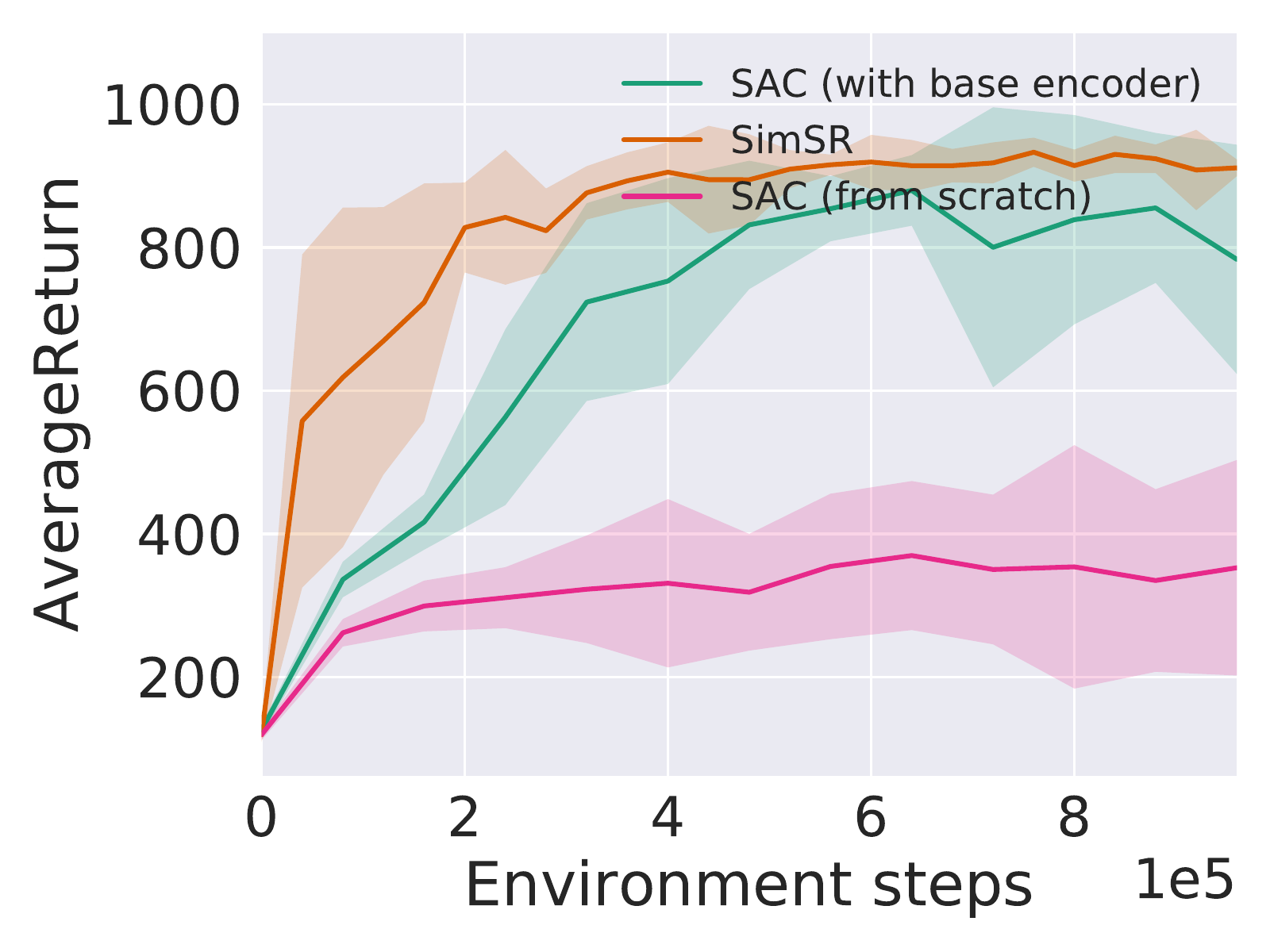}
}
\subfigure[Walker Run]{
\includegraphics[width=0.21\textwidth]{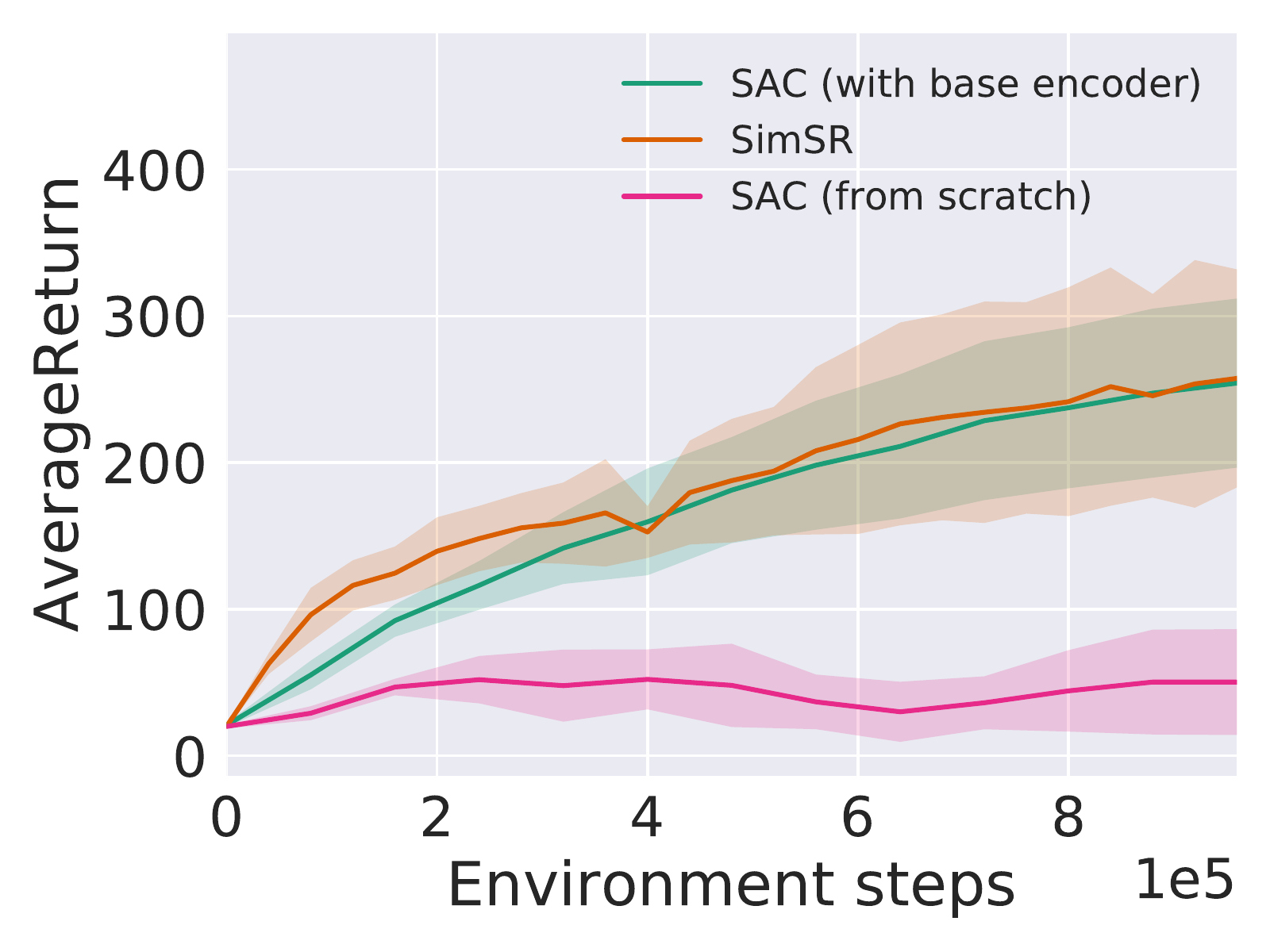}
}
\caption{The comparison between SimSR agent and two types of SAC agents on \texttt{Walker Stand} and \texttt{Walker Run} tasks. The tasks share similar image observations, but have different reward.}
\label{fig/generalization}
\end{figure}

The results demonstrate that SimSR agents perform the best in both tasks as they are fine-tuned to their specific environments. The use of learned representations in generalized tasks, as represented by the SAC agent with the learned encoder, achieves very close performance next to the ideal cases of specific models. Moreover, the SAC agent with the learned encoder significantly outperforms the SAC agent that learned from scratch, demonstrating the generalization capability of SimSR. The results empirically prove the potential of SimSR in learning latent representations which can be generalized well over the tasks that share the same observation space but with different reward functions and goals.

\subsection{Ablation study}
We investigated the importance of $\ell_2$ normalization, which guarantees the properties of unit length and zero self-distance in the results. Without $\ell_2$ normalization, the agent cannot produce acceptable performance. This finding empirically underpins our theoretical analysis: if the distance is not unit length or zero self-distance, the representation may easily collapse.
Besides, we also analyzed the effectiveness of the ensemble transition models. 
The results shows that even with a single probabilistic model, the agent can still achieve remarkable performance, possibly with higher variances. This confirms our hypothesis that the ensemble model can estimate the uncertainty better, thus leading to more stable representations. All results are presented in Appendix.

\section{Conclusion and Future Work}
In the paper, we presented SimSR, a distance-based representation learning method, which can be easily adopted to any model-based or model-free RL methods. 
The experiments demonstrated that SimSR is capable of learning robust and generalizable state representation and further enhancing policy learning. The results show that we improved the performance considerably over existing image-based RL methods.

Outside the line of this research, learning state representation with data augmentation can also improve the effectiveness of extracting task-relevant signals. One important future work is to combine two lines of approaches. 

\section{Acknowledgments}
This work has been partially supported by NSFC under Grant (U19B2020 and 61772074) and National Key R$\&$D Program of China under Grant (2017YFB0803300).

\bibliography{aaai22}

\newpage
\appendix
\setcounter{theorem}{1}
\setcounter{proposition}{0}

\section{Theorems and Proofs}

\begin{theorem}
\label{theorem2_app}
Given a policy $\pi$, Simple State Representation (SimSR) which is updated as:
\begin{equation}
\begin{aligned}
    \label{ori_trans_app}
    \mathscr{F}^{\pi}\overline{\text{cos}}_{\phi}(\mathbf{x},\mathbf{y})=&|r_{\mathbf{x}}^{\pi}-r_{\mathbf{y}}^{\pi}|+\\\gamma &\mathbb{E} _{\mathbf{x}'\sim \hat{\mathcal{P}}_{\mathbf{x}}^{\pi},\mathbf{y}'\sim \hat{\mathcal{P}}_{\mathbf{y}}^{\pi}}[\overline{\text{cos}}_{\phi}(\mathbf{x}',\mathbf{y}')]
\end{aligned}
\end{equation}
has the same fixed point as MICo.
\end{theorem}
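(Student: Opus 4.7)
The plan is to exploit the observation that the operator $\mathscr{F}^\pi$ in Theorem~\ref{theorem2_app} is the \emph{same} functional as the MICo operator in Theorem~\ref{theorem:MICo_dis}: both act on a function $f\colon \mathcal{S}\times\mathcal{S}\to\mathbb{R}$ by
\begin{equation*}
(\mathscr{F}^\pi f)(\mathbf{x},\mathbf{y}) \;=\; |r^\pi_{\mathbf{x}}-r^\pi_{\mathbf{y}}|+\gamma\,\mathbb{E}_{\mathbf{x}'\sim\hat{\mathcal{P}}^\pi_{\mathbf{x}},\,\mathbf{y}'\sim\hat{\mathcal{P}}^\pi_{\mathbf{y}}}\!\left[f(\mathbf{x}',\mathbf{y}')\right],
\end{equation*}
the only difference being that SimSR specializes the argument to $f=\overline{\text{cos}}_\phi$. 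A SimSR fixed point is therefore precisely an encoder $\phi^*$ whose cosine-distance function is a fixed point of $\mathscr{F}^\pi$ in the ordinary functional sense.

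First I would verify that $\mathscr{F}^\pi$, viewed as a self-map on the Banach space of bounded real-valued functions on $\mathcal{S}\times\mathcal{S}$ under the sup-norm, is a $\gamma$-contraction. Since the reward-difference term cancels under subtraction,
\begin{equation*}
\|\mathscr{F}^\pi f - \mathscr{F}^\pi g\|_\infty \;\leq\; \gamma\sup_{\mathbf{x},\mathbf{y}}\mathbb{E}\!\left|f(\mathbf{x}',\mathbf{y}') - g(\mathbf{x}',\mathbf{y}')\right| \;\leq\; \gamma\,\|f - g\|_\infty.
\end{equation*}
Banach's fixed-point theorem then yields a unique fixed point in this space, which by Theorem~\ref{theorem:MICo_dis} is precisely the MICo distance $U^\pi$. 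Hence any $\phi^*$ satisfying $\overline{\text{cos}}_{\phi^*}=\mathscr{F}^\pi\overline{\text{cos}}_{\phi^*}$ must give $\overline{\text{cos}}_{\phi^*}=U^\pi$, which is exactly the assertion that the SimSR and MICo operators share the same fixed point.

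The main subtlety I would flag is \emph{realizability}. The contraction argument shows that if a $\phi^*$ solving the SimSR equation exists, its cosine-distance function coincides with $U^\pi$, but one must separately confirm that $U^\pi$ actually lies in the image of the map $\phi\mapsto\overline{\text{cos}}_\phi$. Non-negativity, symmetry, boundedness in $[0,2]$, and zero self-distance of $U^\pi$ are preserved by $\mathscr{F}^\pi$ and thus inherited along the Banach iteration from any $\phi$-realizable initialization; the remaining technical question is whether the associated similarity kernel $1-U^\pi$ is positive semi-definite, which is what would allow an isometric embedding of $\mathcal{S}$ onto the unit sphere of some Hilbert space. This realizability concern, rather than the fixed-point identification itself, is the genuinely interesting part of the statement, and I would handle it as a separate remark rather than inside the main contraction argument.
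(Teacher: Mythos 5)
Your argument is essentially the paper's own proof: both establish that $\mathscr{F}^{\pi}$ is a $\gamma$-contraction in the sup-norm (the reward terms cancel and the expectation over a stochastic transition is non-expansive), then invoke the uniqueness of the MICo fixed point $U^{\pi}$ to conclude the two operators coincide at their fixed point. Your closing remark on \emph{realizability} --- whether $U^{\pi}$ actually lies in the image of $\phi \mapsto \overline{\text{cos}}_{\phi}$, i.e.\ whether $1-U^{\pi}$ is a positive semi-definite kernel admitting a unit-sphere embedding --- is a genuine gap that the paper's proof silently elides, and correctly identifying it as the nontrivial content of the theorem is a point in your favor rather than a defect of your proof.
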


\begin{proof}
This proof mimics the proof of Proposition 4.2 from \cite{castro2021mico}. 
Let $\phi,\phi': \mathcal{X}\rightarrow\mathcal{S}$. Note that $\forall \mathbf{x},\mathbf{y}\in\mathcal{X}$: 
\begin{equation}
\begin{aligned}
&|\mathscr{F}^{\pi}\overline{\text{cos}}_{\phi}(\mathbf{x},\mathbf{y})-\mathscr{F}^{\pi}\overline{\text{cos}}_{\phi'}(\mathbf{x},\mathbf{y})|\\
=&\big|\gamma \mathbb{E} _{\mathbf{x}'\sim \hat{\mathcal{P}}_{\mathbf{x}}^{\pi},\mathbf{y}'\sim \hat{\mathcal{P}}_{\mathbf{y}}^{\pi}}[(\overline{\text{cos}}_{\phi}-\overline{\text{cos}}_{\phi'})(\mathbf{x},\mathbf{y})]\big| \\
\stackrel{(a)}{\leq}& \gamma \|(\overline{\text{cos}}_{\phi}-\overline{\text{cos}}_{\phi'})(\mathbf{x},\mathbf{y})\|_{\infty}
\end{aligned}
\end{equation}
We can easily verify that (a) holds since the transition is a stochastic matrix. Therefore, $\|\mathscr{F}^{\pi}\overline{\text{cos}}_{\phi}(\mathbf{x},\mathbf{y})-\mathscr{F}^{\pi}\overline{\text{cos}}_{\phi'}(\mathbf{x},\mathbf{y})\| \leq \gamma \|(\overline{\text{cos}}_{\phi}-\overline{\text{cos}}_{\phi'})(\mathbf{x},\mathbf{y})\|_{\infty}$.

Since ~\cite{castro2021mico} proved that Theorem~\ref{theorem:MICo_dis} has the unique fixed point and SimSR follows the same update procedure, we conclude that they share the same fixed point.
\end{proof}

\begin{theorem}
\label{simsr_with_dynamics_app}
Given a policy $\pi$, let SimSR operator $\mathbbm{F}^{\pi}:\mathbb{R}^{\mathcal{S}\times \mathcal{S}}\rightarrow\mathbb{R}^{\mathcal{S}\times \mathcal{S}}$ be
\begin{equation}
\begin{aligned}
    \label{con_trans_app}
    \mathbbm{F}^{\pi}\overline{\text{cos}}_{\phi}(\mathbf{x},\mathbf{y})=&|r_{\mathbf{x}}^{\pi}-r_{\mathbf{y}}^{\pi}|+\gamma \mathbb{E} _{\mathbf{s}'\sim \mathcal{P}_{\phi(\mathbf{x})}^{\pi},\mathbf{u}'\sim \mathcal{P}_{\phi(\mathbf{y})}^{\pi}}[\overline{\text{cos}}(s',u')].
\end{aligned}
\end{equation}
If latent dynamics are specified, $\mathbbm{F}^{\pi}$ has a fixed point.
\end{theorem}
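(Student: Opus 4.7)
My plan is to mirror the contraction-mapping argument used for Theorem~\ref{theorem2_app} and conclude existence of a fixed point via Banach's theorem. The key observation is that, once the latent dynamics $\mathcal{P}(\cdot\mid \mathbf{s},\mathbf{a})$ are specified (i.e.\ fixed as a Markov kernel on $\mathcal{S}\times\mathcal{A}$) and the encoder $\phi$ is held fixed, the operator $\mathbbm{F}^{\pi}$ is a standard Bellman-style operator acting on the Banach space $\mathcal{U}$ of bounded functions $U:\mathcal{S}\times\mathcal{S}\to\mathbb{R}$ (equivalently, on bounded functions of observation pairs composed through $\phi$) equipped with the sup norm.

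First I would fix $\phi$ and regard the right-hand side of Eq.~\ref{con_trans_app} as defining an operator that maps any bounded $U \in \mathcal{U}$ to a new bounded function
\begin{equation*}
\mathbbm{F}^{\pi}U(\mathbf{x},\mathbf{y}) = |r_{\mathbf{x}}^{\pi}-r_{\mathbf{y}}^{\pi}| + \gamma \, \mathbb{E}_{\mathbf{s}'\sim\mathcal{P}^{\pi}_{\phi(\mathbf{x})},\,\mathbf{u}'\sim\mathcal{P}^{\pi}_{\phi(\mathbf{y})}}\!\bigl[U(\mathbf{s}',\mathbf{u}')\bigr].
\end{equation*}
The reward term does not depend on $U$, so it cancels when comparing $\mathbbm{F}^{\pi}U$ and $\mathbbm{F}^{\pi}U'$. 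The remaining difference is an expectation of $U-U'$ taken against the product kernel $\mathcal{P}^{\pi}_{\phi(\mathbf{x})}\otimes\mathcal{P}^{\pi}_{\phi(\mathbf{y})}$, which is a stochastic kernel because the latent dynamics are specified.

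Next I would apply the standard bound used in the proof of Theorem~\ref{theorem2_app}: for every $(\mathbf{x},\mathbf{y})$,
\begin{equation*}
\bigl|\mathbbm{F}^{\pi}U(\mathbf{x},\mathbf{y})-\mathbbm{F}^{\pi}U'(\mathbf{x},\mathbf{y})\bigr|
\;\leq\; \gamma\,\mathbb{E}\bigl[\,|U(\mathbf{s}',\mathbf{u}')-U'(\mathbf{s}',\mathbf{u}')|\,\bigr]
\;\leq\; \gamma\,\|U-U'\|_{\infty},
\end{equation*}
so taking the supremum over $(\mathbf{x},\mathbf{y})$ yields $\|\mathbbm{F}^{\pi}U - \mathbbm{F}^{\pi}U'\|_{\infty}\leq \gamma\|U-U'\|_{\infty}$. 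Since $0<\gamma<1$, $\mathbbm{F}^{\pi}$ is a $\gamma$-contraction on the complete metric space $(\mathcal{U},\|\cdot\|_{\infty})$, and Banach's fixed point theorem delivers a unique fixed point. In particular, initializing the iteration at $U_0=\overline{\text{cos}}_{\phi}$ converges to this fixed point, which establishes the claim.

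The main obstacle I anticipate is conceptual rather than technical: one has to make precise in what sense $\mathbbm{F}^{\pi}$ has a fixed point, because the left-hand side of Eq.~\ref{con_trans_app} is written with the $\phi$-indexed cosine distance while the right-hand side samples successor latent states via $\mathcal{P}^{\pi}_{\phi(\mathbf{x})}$. The fixed point is not of the form ``$\overline{\text{cos}}_{\phi^\star}$ for some $\phi^\star$'' in general; rather, with $\phi$ fixed, it is a fixed point of $\mathbbm{F}^{\pi}$ on $\mathcal{U}$. This is exactly why the statement carries the hypothesis ``if latent dynamics are specified''—without that, $\mathcal{P}$ would be co-evolving with $\phi$, the kernel in the expectation would change between iterations, and the contraction argument above would break down. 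Once this hypothesis is in force, the argument reduces to the classical Banach contraction proof and nothing deeper is needed.
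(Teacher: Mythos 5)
Your proposal is correct and follows essentially the same route as the paper: the paper's own proof simply substitutes $\mathbf{s}'=\phi(\mathbf{x}')$, $\mathbf{u}'=\phi(\mathbf{y}')$ and ``mimics'' the proof of Theorem~2, which is exactly the $\gamma$-contraction-in-sup-norm plus Banach fixed point argument you spell out. Your closing remark --- that once the latent dynamics are fixed the operator should be read as acting on the full space of bounded functions $U:\mathcal{S}\times\mathcal{S}\to\mathbb{R}$ rather than on the $\phi$-indexed family $\{\overline{\text{cos}}_{\phi}\}$ (which is not closed under $\mathbbm{F}^{\pi}$) --- is a point the paper's two-line proof glosses over, and makes your write-up the more careful of the two.
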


\begin{proof}

If the latent dynamics are specified, then we can substitute $\mathbf{s}'$ with $\phi(\mathbf{x}')$ and substitute $\mathbf{u}'$ with $\phi(\mathbf{y}')$. Then we can easily mimic the proof of Theorem~\ref{theorem2_app} to complete this proof.
\end{proof}

\section{The benefits of SimSR operator}
In this section, we provide some theoretical discussion about the benefits of SimSR operator.
\subsection{Zero self-distance Effectiveness}

``Representation collapse'' in RL means that two states (observations) with different values are collapsed to the same representation. To investigate the connection between value function and state representation, the following proposition is applied:
\begin{proposition}\cite{castro2021mico}
\label{propo:representation_collapse_app}
$\forall \mathbf{x}, \mathbf{y} \in \mathcal{X}$ and for any policy $\pi\in\Pi$, 
\begin{equation}
    \begin{aligned}
&\left|V^{\pi}(\mathbf{x})-V^{\pi}\left(\mathbf{y}\right)\right| \leq U^{\pi}\left(\mathbf{x}, \mathbf{y}\right).
\end{aligned}
\end{equation}
\end{proposition}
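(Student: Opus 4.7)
The plan is to exploit that the MICo operator $\mathscr{F}^\pi$ (the operator whose fixed point is $U^\pi$) is a monotone $\gamma$-contraction on the space of nonnegative functions on $\mathcal{X}\times\mathcal{X}$, and to show that the nonnegative function $f(\mathbf{x},\mathbf{y}):=|V^\pi(\mathbf{x})-V^\pi(\mathbf{y})|$ is a sub-fixed-point, i.e.\ $f\le \mathscr{F}^\pi f$ pointwise. Once that is in place, iterating $\mathscr{F}^\pi$ and using the contraction limit will sandwich $f$ below $U^\pi$.

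First, I would write out the Bellman equation for the value function of $\pi$ on observations,
\begin{equation*}
V^\pi(\mathbf{x}) \;=\; r_{\mathbf{x}}^\pi \;+\; \gamma\,\mathbb{E}_{\mathbf{x}'\sim\hat{\mathcal{P}}_{\mathbf{x}}^\pi}\!\left[V^\pi(\mathbf{x}')\right],
\end{equation*}
subtract the analogous equation for $V^\pi(\mathbf{y})$, and apply the triangle inequality together with Jensen's inequality ($|\mathbb{E}[\cdot]|\le \mathbb{E}[|\cdot|]$) to the resulting difference. Using that $\mathbf{x}'$ and $\mathbf{y}'$ can be coupled independently inside the expectation, this yields
\begin{equation*}
|V^\pi(\mathbf{x})-V^\pi(\mathbf{y})| \;\le\; |r_{\mathbf{x}}^\pi-r_{\mathbf{y}}^\pi| \;+\; \gamma\,\mathbb{E}_{\mathbf{x}'\sim\hat{\mathcal{P}}_{\mathbf{x}}^\pi,\,\mathbf{y}'\sim\hat{\mathcal{P}}_{\mathbf{y}}^\pi}\!\left[|V^\pi(\mathbf{x}')-V^\pi(\mathbf{y}')|\right],
\end{equation*}
which is precisely $f\le \mathscr{F}^\pi f$.

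Second, I would invoke the two structural properties of $\mathscr{F}^\pi$: monotonicity (if $d_1\le d_2$ pointwise then $\mathscr{F}^\pi d_1 \le \mathscr{F}^\pi d_2$, since the reward term is shared and the expectation preserves inequalities) and $\gamma$-contraction in the sup norm (already used in the proof of Theorem~\ref{theorem2_app}). Applying monotonicity to $f\le \mathscr{F}^\pi f$ iteratively gives $f\le (\mathscr{F}^\pi)^n f$ for every $n\ge 1$. By the Banach fixed-point theorem, $(\mathscr{F}^\pi)^n f\to U^\pi$ in sup norm as $n\to\infty$, whence passing to the limit yields $f\le U^\pi$ pointwise, which is exactly the claim.

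I do not expect any genuine obstacle here: the argument is the standard "value difference dominated by bisimulation-style distance" template (used e.g.\ in Ferns--Panangaden--Precup and in the MICo paper itself). The only places to be careful are (i) justifying the independent coupling of $\mathbf{x}'$ and $\mathbf{y}'$ when combining the two Bellman equations under a single expectation, which is legitimate because the two next-state distributions are independent by construction in $\mathscr{F}^\pi$, and (ii) stating monotonicity of $\mathscr{F}^\pi$ explicitly, since it is used but not made explicit in Theorem~\ref{theorem2_app}. Both are short remarks rather than real difficulties.
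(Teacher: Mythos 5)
Your proof is correct. Note that the paper does not actually prove this proposition: it is stated with a citation to \cite{castro2021mico} and used as an imported fact, so there is no in-paper argument to compare against. Your argument is the standard one for such ``value difference bounded by behavioural distance'' results, and every step holds: the Bellman equation for $V^\pi$ on observations, the passage to a single expectation over the independent product coupling (legitimate because the two marginals are preserved and this is exactly the coupling the MICo operator uses, which is what makes the bound land on $\mathscr{F}^\pi f$ rather than a Wasserstein term), the sub-fixed-point inequality $f\le\mathscr{F}^\pi f$, and the combination of monotonicity with the $\gamma$-contraction property (the latter already established in the proof of Theorem 2) to get $f\le(\mathscr{F}^\pi)^n f\to U^\pi$. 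Your two cautionary remarks are the right ones; monotonicity in particular must be stated, since contraction alone does not let you iterate the inequality. The only addition I would make is a one-line observation that $f$ is bounded (rewards bounded, $\gamma<1$), so the Banach iteration takes place in the complete space of bounded functions on $\mathcal{X}\times\mathcal{X}$ under the sup norm; the cited reference reaches the same conclusion by the essentially equivalent route of an induction on the value-iteration and metric-iteration sequences followed by a limit, so nothing in your approach conflicts with the source the paper relies on.
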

Following Proposition~\ref{propo:representation_collapse_app}, we can obtain $\left|V^{\pi}(\mathbf{x})-V^{\pi}\left(\mathbf{y}\right)\right| \leq 1-\text{cos}(\phi(\mathbf{x}),\phi(\mathbf{y}))$ in SimSR. When two states are encoded into the same representation, the MICo distance $U^\pi(\mathbf{x},\mathbf{y})$ cannot guarantee their value being equal since it violates zero self-distance property. In contrast, when $\phi(\mathbf{x})=\phi(\mathbf{y})$ happens in SimSR, the cosine distance ensure that:
\begin{equation}
    \left|V^{\pi}(\mathbf{x})-V^{\pi}\left(\mathbf{y}\right)\right| \leq 1-\text{cos}(\phi(\mathbf{x}),\phi(\mathbf{y}))=0,
\end{equation}
where the transitivity of the relation guarantees their value function should be the same and therefore alleviates the aforementioned representation collapse issue.

\subsection{Computational complexity}
\begin{proposition}
The fixed point of $\pi$-bisimulation metric can be computed up to a prescribed degree of accuracy $\delta$ in $\tilde{O}\left(|\mathcal{S}|^{5} \log \delta / \log \gamma\right)$ operations with respect to $L^\infty$ norm. Given the fact that SimSR has the same fixed point with MICo, it can be computed in $O\left(|\mathcal{S}|^{4} \log \delta / \log \gamma\right)$ operations.
\end{proposition}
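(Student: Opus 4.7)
The plan is to establish both complexity bounds by separately analyzing (i) the number of iterations the contractive operator requires to reach $L^\infty$ accuracy $\delta$, and (ii) the per-iteration cost. For the $\pi$-bisimulation case I would first recall that the operator $\mathcal{F}^\pi$ is a $\gamma$-contraction on the space of bounded pseudometrics over $\mathcal{S}\times\mathcal{S}$ in $L^\infty$. By the Banach fixed-point theorem, after $k$ iterations from any bounded initialization the error shrinks by $\gamma^k$, so reaching error $\delta$ requires $k=\lceil \log\delta/\log\gamma\rceil$ iterations. (Since $\gamma<1$ and typically $\delta<1$, both logs are negative and the ratio is positive.)

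Next I would bound the per-iteration cost. An iteration must refresh all $|\mathcal{S}|^2$ entries. For $\pi$-bisimulation, each entry requires a $1$-Wasserstein distance between two distributions supported on at most $|\mathcal{S}|$ atoms; invoking a standard min-cost flow / transportation algorithm gives this in $\tilde{O}(|\mathcal{S}|^3)$ time per pair, hence $\tilde{O}(|\mathcal{S}|^5)$ per iteration and the claimed $\tilde{O}(|\mathcal{S}|^5\log\delta/\log\gamma)$ overall. The reward term $|r_\mathbf{x}^\pi - r_\mathbf{y}^\pi|$ can be precomputed once in $O(|\mathcal{S}|^2|\mathcal{A}|)$ and absorbed into lower-order terms.

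For SimSR I would invoke Theorem~2 to conclude that $\mathscr{F}^\pi$ shares the same fixed point as MICo and is likewise a $\gamma$-contraction in $L^\infty$ (the contraction argument in the proof of Theorem~\ref{theorem2_app} immediately applies), so the number of iterations is unchanged. The decisive saving is the per-entry cost: the update aggregates $\overline{\text{cos}}_\phi(\mathbf{x}',\mathbf{y}')$ against the product measure $\hat{\mathcal{P}}^\pi_{\mathbf{x}}\otimes\hat{\mathcal{P}}^\pi_{\mathbf{y}}$, which is an inner product of two $|\mathcal{S}|$-vectors with an $|\mathcal{S}|\times|\mathcal{S}|$ matrix and hence computable in $O(|\mathcal{S}|^2)$ time. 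Summing over $|\mathcal{S}|^2$ pairs yields $O(|\mathcal{S}|^4)$ per iteration and the stated $O(|\mathcal{S}|^4\log\delta/\log\gamma)$ total.

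The main obstacle is not the contraction analysis, which is routine, but rather justifying the Wasserstein-per-pair cost $\tilde{O}(|\mathcal{S}|^3)$ cleanly enough to be invoked as a black box; I would cite the appropriate optimal-transport/min-cost-flow result for discrete distributions and note that any improved transportation algorithm would tighten the exponent accordingly. A secondary subtlety is that the iteration count bound is meaningful only when the initial error is $O(1)$, which follows here because both metrics are uniformly bounded by $\|r\|_\infty/(1-\gamma)$, so the hidden additive constant in $\log\delta/\log\gamma$ is harmless.
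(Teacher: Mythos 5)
Your proposal is correct, but it is worth noting that the paper does not actually supply an argument here: its entire proof is the single line ``Refer to \cite{castro2021mico}'', deferring to the corresponding complexity proposition in the MICo paper. What you have written is essentially a reconstruction of that deferred argument, and it is the right one: the $\gamma$-contraction of the operator in $L^\infty$ gives the $\log\delta/\log\gamma$ iteration count (with the initial error absorbed because all iterates are bounded by $\|r\|_\infty/(1-\gamma)$), the per-pair $1$-Wasserstein computation via min-cost flow costs $\tilde{O}(|\mathcal{S}|^3)$ and yields the $|\mathcal{S}|^5$ exponent for $\pi$-bisimulation, and replacing the optimal coupling by the independent coupling reduces the per-pair cost to an $O(|\mathcal{S}|^2)$ sum over the product measure, giving $|\mathcal{S}|^4$ for the MICo/SimSR recursion. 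Two small caveats. First, you are right that the Wasserstein-per-pair bound must be invoked as a black box; the cited lineage (Ferns et al.\ and the MICo paper) does exactly this, so your treatment is consistent with the source. Second, your contraction claim for $\mathscr{F}^\pi$ should be understood as a statement about the tabular recursion on $\mathbb{R}^{\mathcal{S}\times\mathcal{S}}$ (which is what the operation count refers to), not about iteration within the parameterized family $\{\overline{\text{cos}}_\phi\}$; the paper's Theorem~2 proof only establishes the sup-norm contraction inequality and does not address closure of that family under the operator, so phrasing the complexity argument at the tabular level, as you implicitly do, is the safe reading.
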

\begin{proof}
Refer to ~\cite{castro2021mico}.
\end{proof}

\begin{table*}
\caption{Comparison of key features in different algorithms. ``Distance consistency'' means that the distance computed in the representation space is consistent with the base ``metric'' that is used in ``behavioral difference''. ``-'' denotes ``does not apply''.}
\label{tab/compare_full}
\centering
\scriptsize
\begin{tabular}{ c c c c c c} 
 \toprule
    & Distance consistency & Unit length feature  & Zero self-distance &  Learn dynamics? \\
 \midrule
SimSR & \cmark & \cmark &  \cmark &  \cmark \\
DBC~\cite{DBLP:conf/iclr/0001MCGL21} & \xmark & \xmark & \cmark & \cmark \\
MICo~\cite{castro2021mico} & \cmark & \xmark & \xmark & \xmark \\
DeepMDP~\cite{DBLP:conf/icml/GeladaKBNB19} & - & \xmark& - & \cmark \\
 \bottomrule
\end{tabular}
\end{table*}

\section{Connection with Most Related Work}
The most related methods to our work are DBC~\cite{DBLP:conf/iclr/0001MCGL21}, MICo~\cite{castro2021mico}, and DeepMDP~\cite{DBLP:conf/icml/GeladaKBNB19}. Table~\ref{tab/compare_full}\footnote{Since DeepMDP does not actually learn representations by bisimulation metric (distance), it has some features as ``does not apply''. 
} provides the comparison of their key features.

DBC learns state representation by minimizing the mean square error between $\pi$-bisimulation metric and $\ell_1$ distance in the latent space:
\begin{equation}
\begin{aligned}
    J(\phi)=\Bigg(&\left\|\mathbf{z}_{i}-\mathbf{z}_{j}\right\|_{1}-\left|r_{i}-r_{j}\right|\\&-\gamma W_{2}\left(\hat{\mathcal{P}}\left(\cdot \mid \overline{\mathbf{z}}_{i}, \mathbf{a}_{i}\right), \hat{\mathcal{P}}\left(\cdot \mid \overline{\mathbf{z}}_{j}, \mathbf{a}_{j}\right)\right)\Bigg)^{2},
\end{aligned}
\end{equation}
where $\mathbf{z}_{i}=\phi(\mathbf{s}_i)$, $\mathbf{z}_{j}=\phi(\mathbf{s}_j)$, r denotes rewards, $\bar{\mathbf{z}}$ denotes $\phi(\mathbf{s})$ with stop gradients, and $\hat{\mathcal{P}}$ is the latent dynamics model that outputs Gaussian distribution. To update the representation, DBC first samples batch $B_i\sim \mathcal{D}$ from replay buffer, then uses permutation to get batch $B_j=\text{permute}(B_i)$, and finally trains the encoder: $\mathbb{E}_{B_i,B_j}[J(\phi)]$ which compares the instance-wise difference between states from different batches. 

There are three major differences between DBC and SimSR. First, DBC uses Euclidean distance of the Gaussian transition probability distribution in the latent space to compute the closed-form Wasserstein distance. As DBC tries to optimize $\ell_1$ distance between representations in the latent space, it has the risk of inconsistent and inaccurate approximation. In comparison, SimSR does not have this risk due to its design. 
Second, given the loss $\mathbb{E}_{B_i,B_j}[J(\phi)]$, for each update step in DBC, each single observation in a batch meet at most two other observations to update its representation. In comparison, SimSR improves the sample efficiency by extending the update to be matrix operation, where each observation can update its representation by computing the distance between every single observations in that batch and itself in one update step. 
Third, SimSR also normalizes the features and utilizes Huber loss to stabilize the representation, which is crucial in joint training. 

MICo is the most relevant work to ours, where a MICo distance is designed to measure the behavioral difference. The MICo distance is parameterized as follows:
\begin{equation}
\label{mico_param}
\begin{aligned}
    U^{\pi}(x, y) \approx U_{\omega}(x, y):=&\frac{\left\|\phi_{\omega}(x)\right\|_{2}+\left\|\phi_{\omega}(y)\right\|_{2}}{2}\\&+\beta \theta\left(\phi_{\omega}(x), \phi_{\omega}(y)\right)
\end{aligned}
\end{equation}
where $\phi_\omega$ is the mapping function parameterized by $\omega$, $\theta\left(\phi_{\omega}(x), \phi_{\omega}(y)\right)$ is the angle between vectors $\phi_\omega(x)$ and $\phi_\omega(y)$, and $\beta$ is a scalar hyperparameter. The settings of Eq.~\ref{mico_param} conform with the Proposition 4.10 in \cite{castro2021mico}. That is, MICo distance is a diffuse metric.
However, since the approximation involves the computation of the angle, numerical instability of the neural networks may escalate the changes of the angle and further affect the approximation precision. Thus, such approximation requires designing specific calculation method and adjusting hyper-parameter $\beta$ carefully according to different tasks, which may diminish the supposed practical advantage of MICo.
Besides, since $U_\omega$ violates the zero self-distance property, it may encounter the failure mode of representation collapse. In contrast, as SimSR is based on cosine distance and does not involve angle computation and additional hyper-parameters, it can naturally avoid representation collapse issue and is more stable when parameterized by neural networks. Furthermore, MICo only applies the raw environment transitions, while SimSR, instead, learns an ensemble version of latent dynamics to improve the robustness of representation, which is essential in environments that involve considerable uncertainty.

DeepMDP parameterizes the reward model and transition model to learn state representation. Its algorithms rely on a rather strong assumption that the learned representation is Lipschitz. In comparison with DeepMDP, SimSR is a simpler framework with significantly better performance as shown in the experiments.

\section{Additional Results}
\subsection{Default setting}
Figure~\ref{fig/standard_curve_app} illustrates the results of experiments with the default setting. 

\begin{figure*}[h]
\centering
\subfigure[Ball In Cup Catch]{
\includegraphics[width=0.25\textwidth]{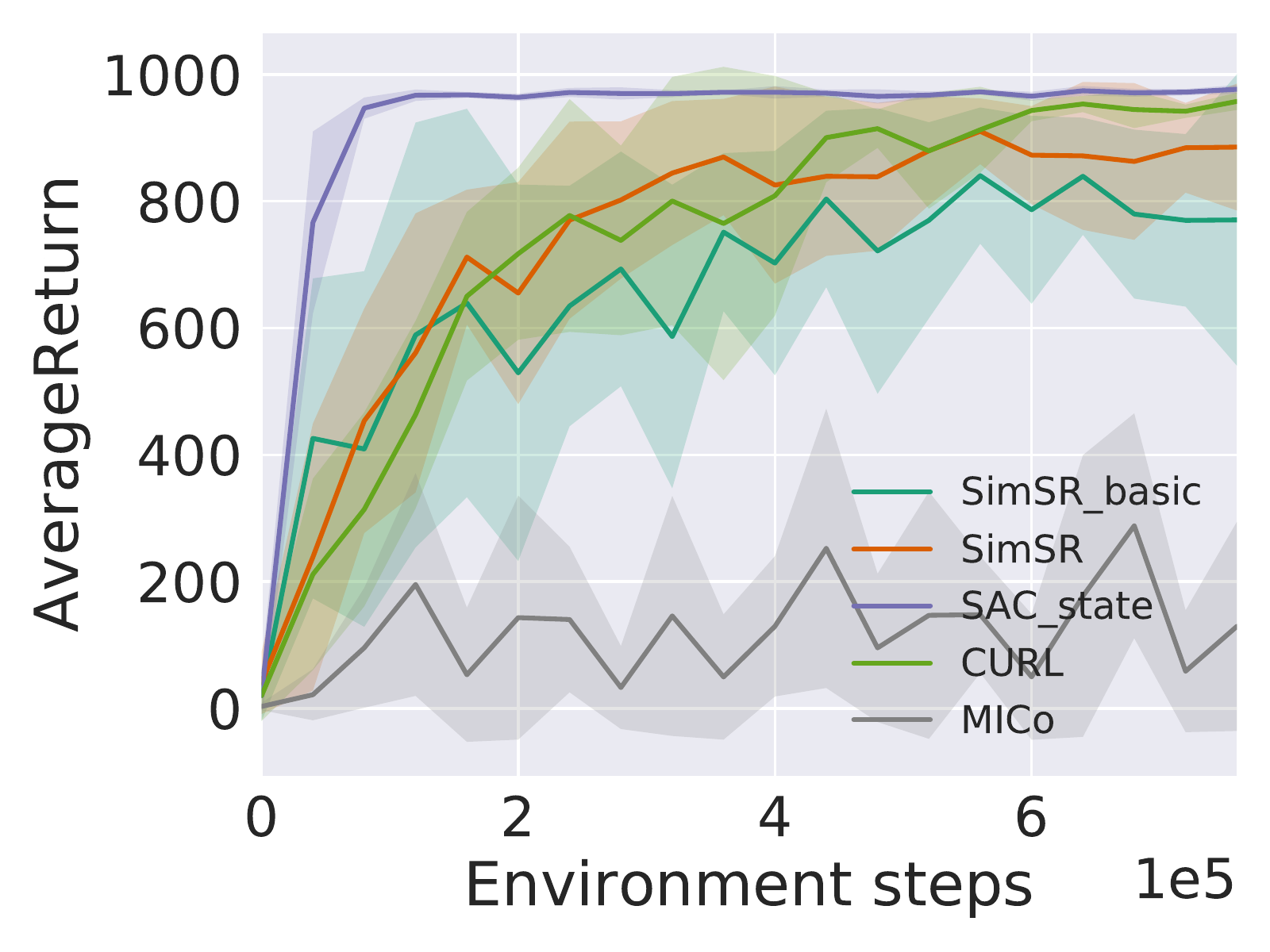}
}
\subfigure[Cartpole Swingup]{
\includegraphics[width=0.25\textwidth]{figures/exp_basic/cartpole_swingup.pdf}
}
\subfigure[Cartpole Swingup Sparse]{
\includegraphics[width=0.25\textwidth]{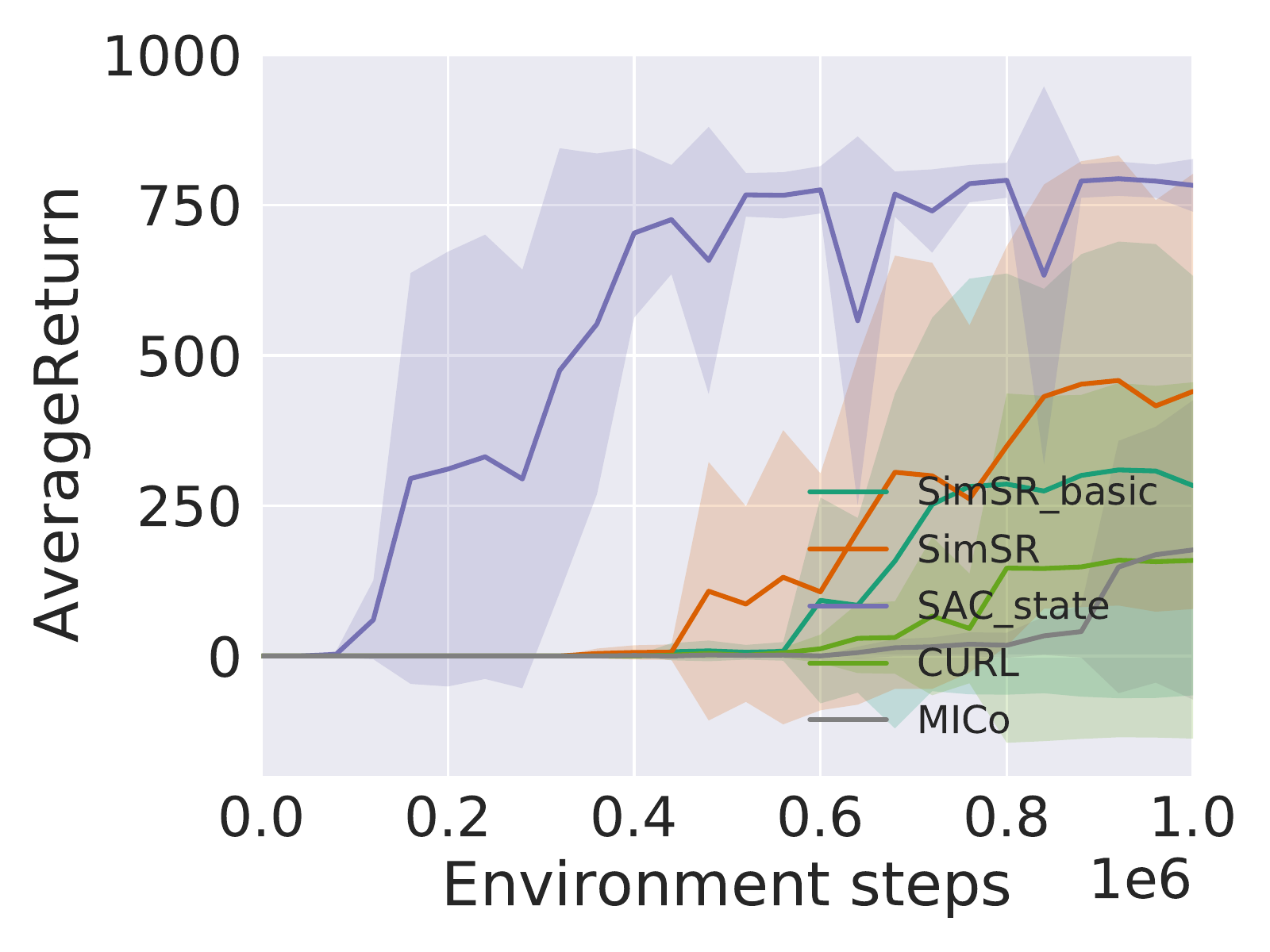}
}
\subfigure[Cheetah Run]{
\includegraphics[width=0.25\textwidth]{figures/exp_basic/cheetah_run.pdf}
}
\subfigure[Hopper Hop]{
\includegraphics[width=0.25\textwidth]{figures/exp_basic/hopper_hop.pdf}
}
\subfigure[Hopper Stand]{
\includegraphics[width=0.25\textwidth]{figures/exp_basic/hopper_stand.pdf}
}
\subfigure[Finger Spin]{
\includegraphics[width=0.25\textwidth]{figures/exp_basic/finger_spin.pdf}
}
\subfigure[Pendulum Swingup]{
\includegraphics[width=0.25\textwidth]{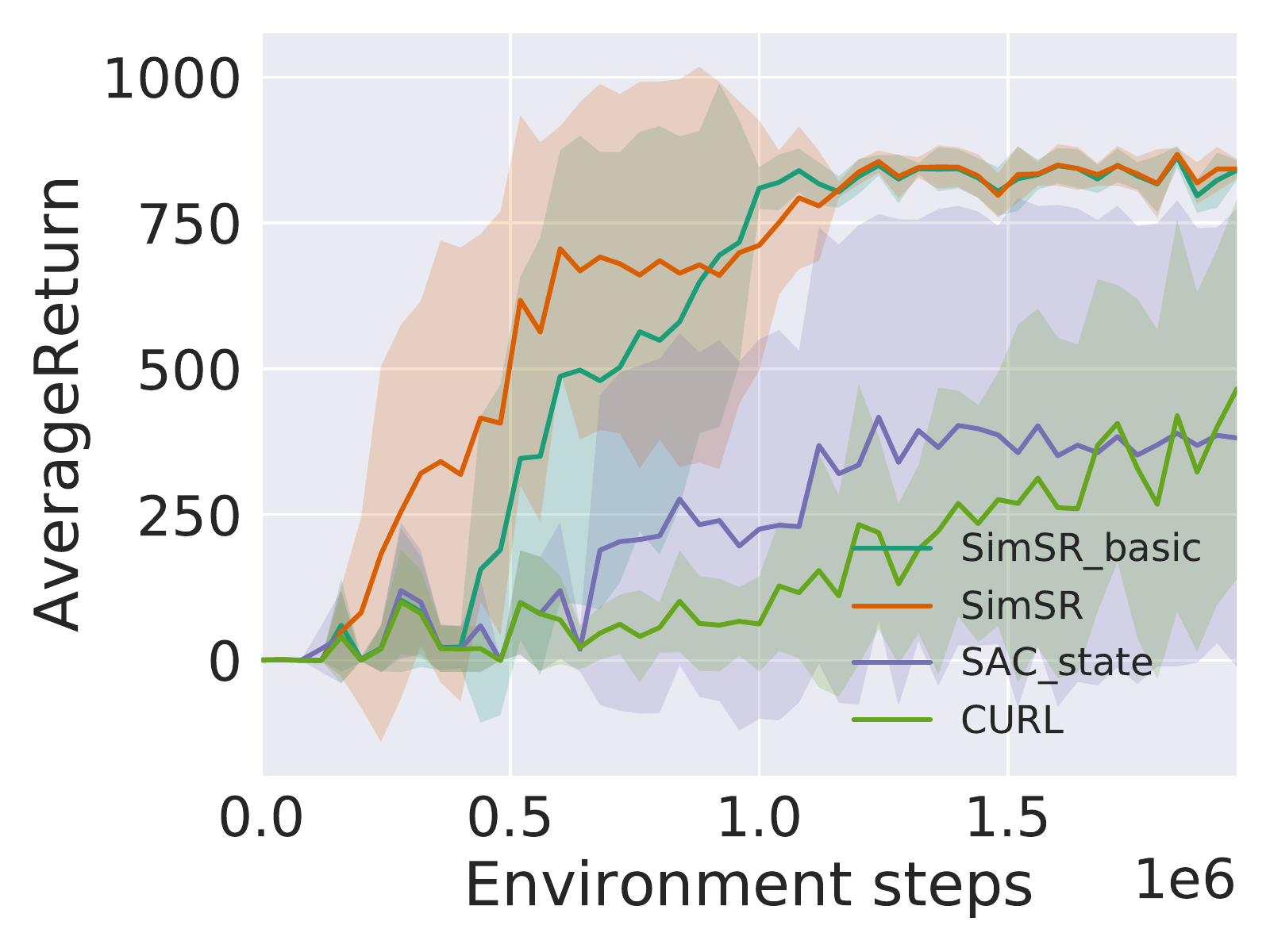}
}
\subfigure[Walker Walk]{
\includegraphics[width=0.25\textwidth]{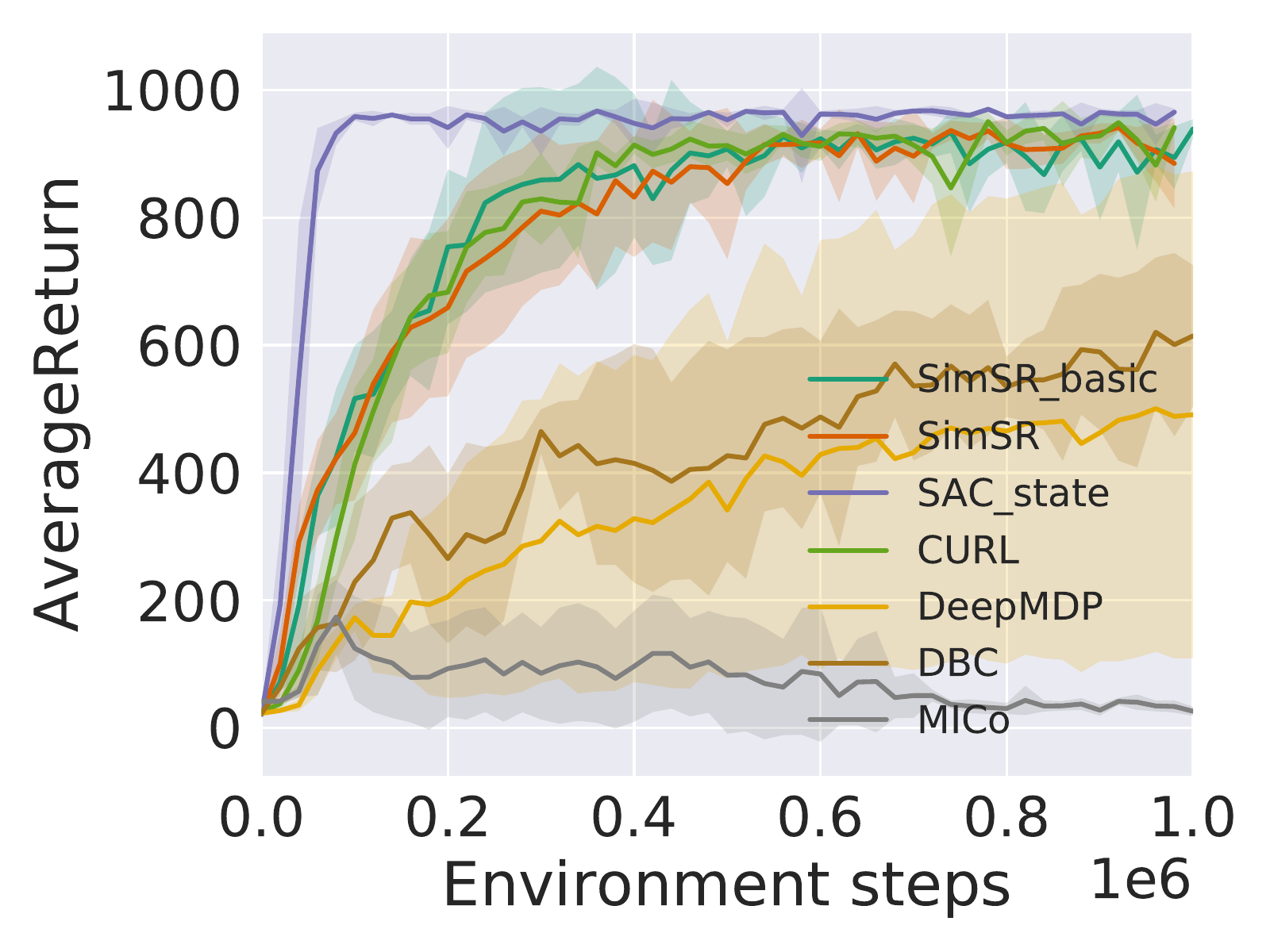}
}
\caption{Performance comparison on 9 DMC tasks over 5 seeds with one standard error shaded in the default setting. For every seed, the average return is computed every 10,000 training steps, averaging over 10 episodes. The horizontal axis indicates number of environment steps. The vertical axis indicates the average return.  }
\label{fig/standard_curve_app}
\end{figure*}

\subsection{Natural video setting}
t-SNE was applied to visualize the representations learned by SimSR and CURL. Figure~\ref{fig:TSNE} shows that regardless of extremely different backgrounds, SimSR is capable of filtering out irrelevant information and mapping the observations with similar robot configurations to neighbors closing to each other in the representation space. 
The color represents the value of the reward for each representation. SimSR learns representations that are close in the latent space with similar reward values. In contrast, CURL maps similar physical-state image pairs to points far from each other in the embedding space.

\begin{figure*}[htbp]
\centering
\includegraphics[width=1\textwidth]{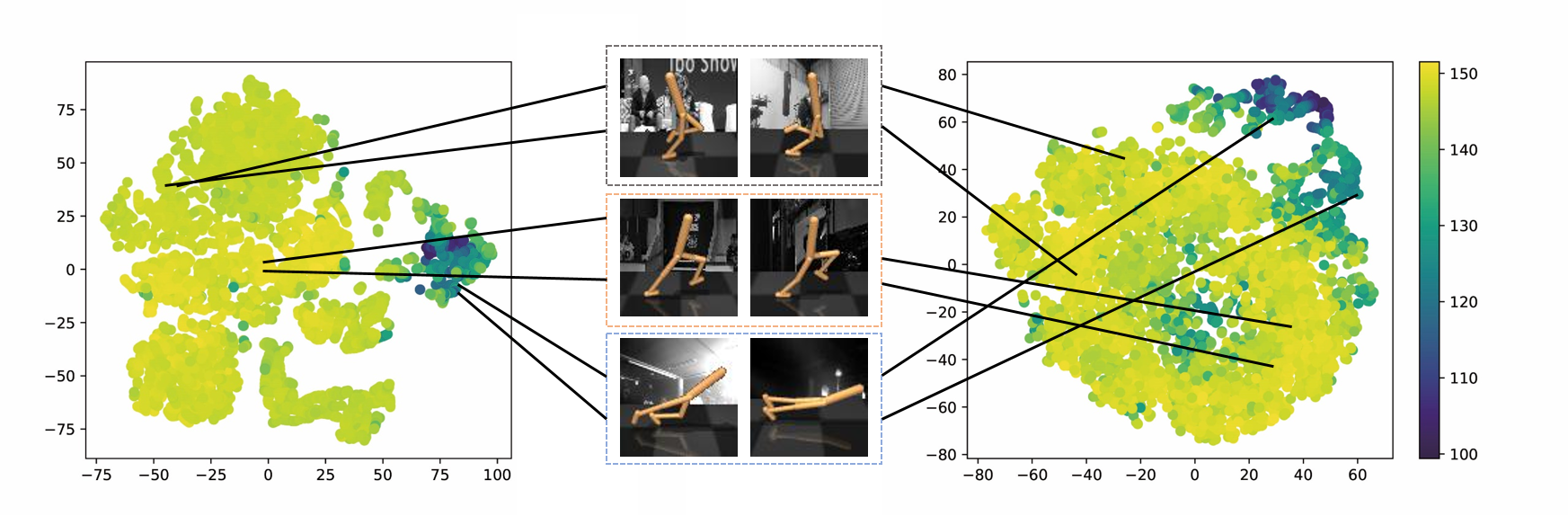}
\caption{t-SNE plots of latent spaces learned with SimSR (left) and CURL (right). Color represents the reward value of embedded points (yellow for higher value yellow and purple for lower value). Each pair of lines indicates the corresponding embedded points for observations with similar physical states but different backgrounds.}
\label{fig:TSNE}
\end{figure*}

\subsection{Ablation study results}
Figure~\ref{fig/ablation} illustrates the results of ablation experiments. 

\begin{figure}[t]
\centering
\subfigure[Ball In Cup Catch]{
\includegraphics[width=0.22\textwidth]{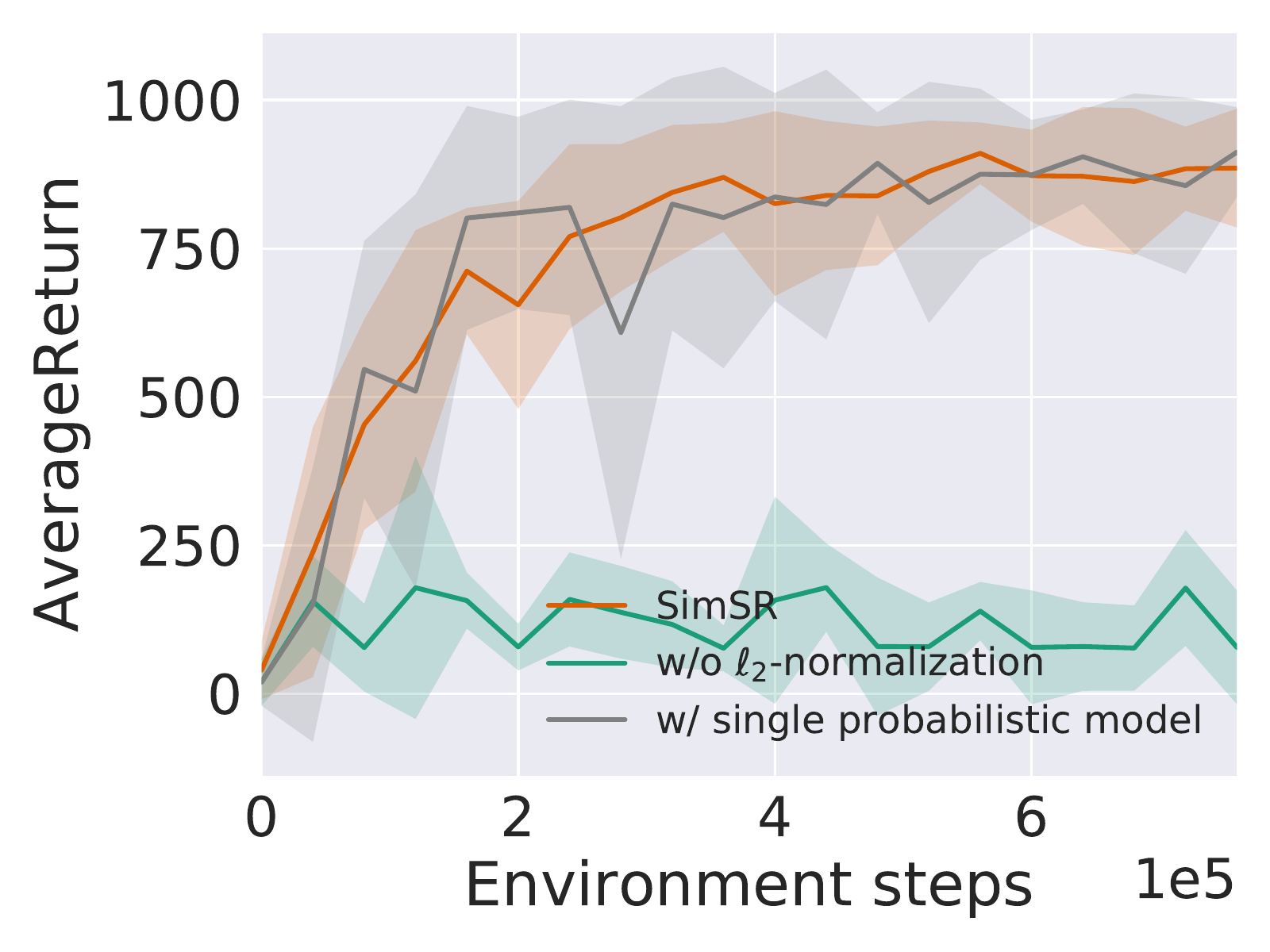}
}
\subfigure[Cheetah Run]{
\includegraphics[width=0.22\textwidth]{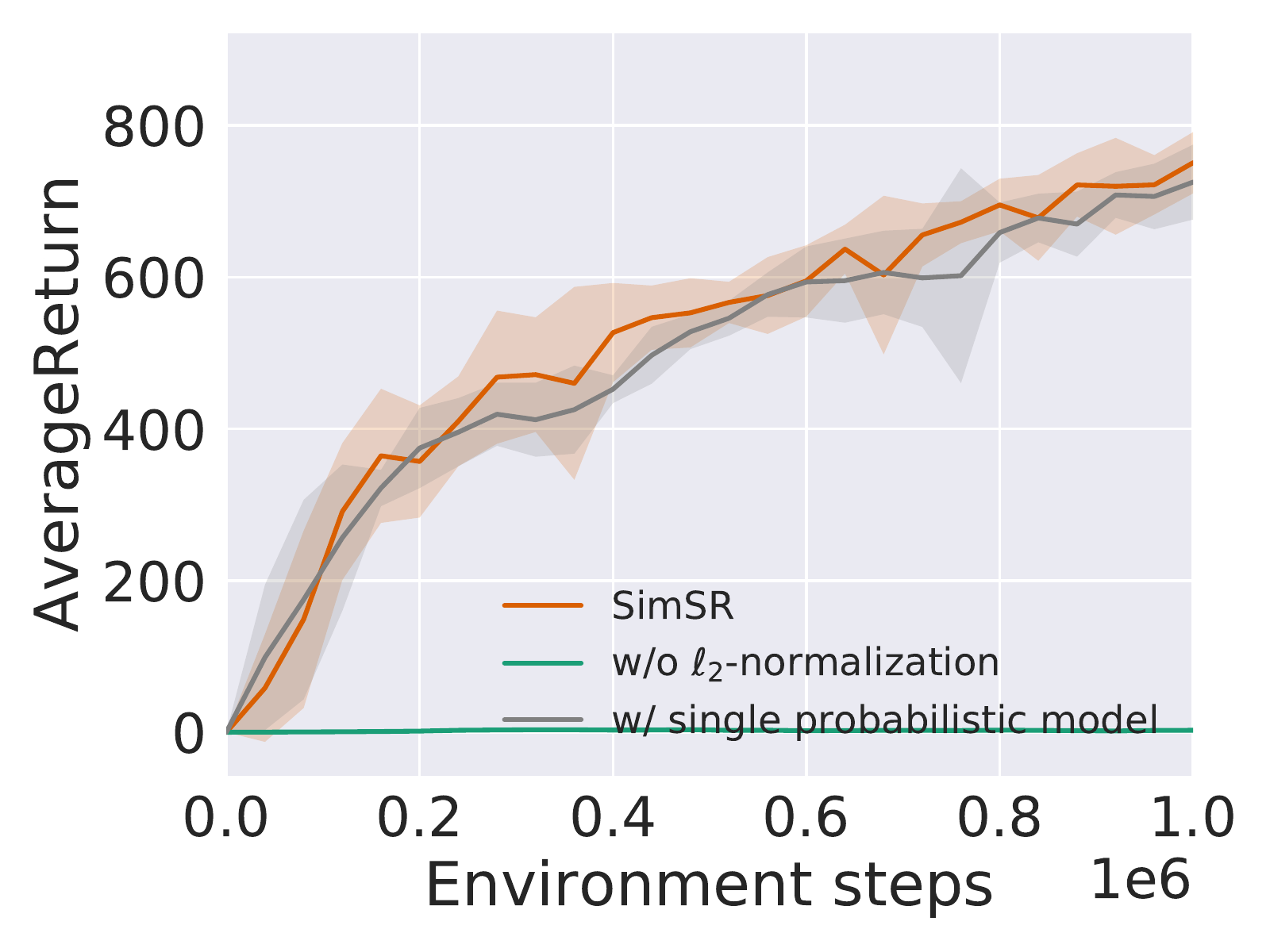}
}
\subfigure[Hopper Hop]{
\includegraphics[width=0.22\textwidth]{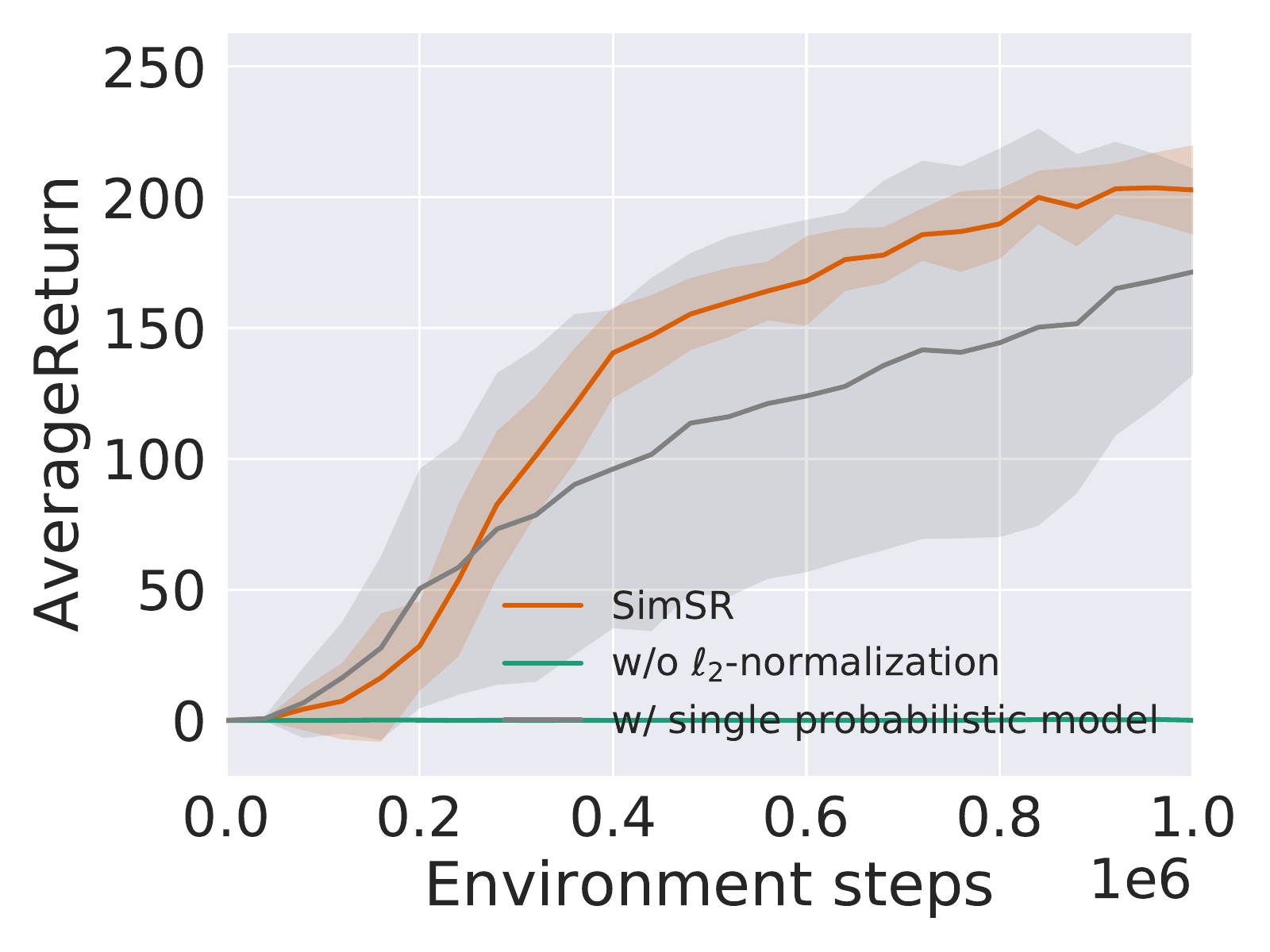}
}
\subfigure[Finger Spin]{
\includegraphics[width=0.22\textwidth]{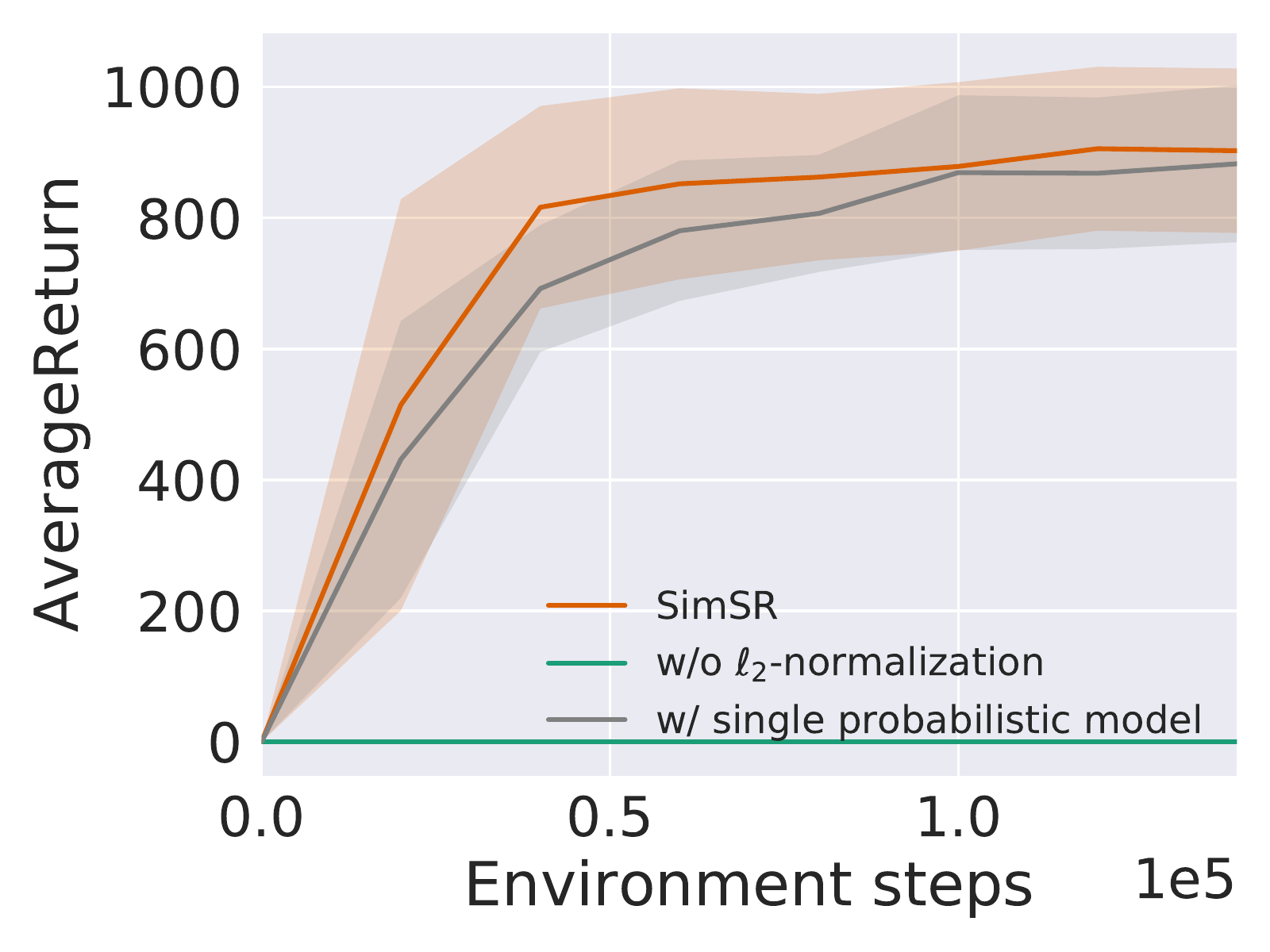}
}
\caption{Results of ablation study}
\label{fig/ablation}
\end{figure}

 \section{Implementation Details}
We build the agents by combining our model with soft actor critic (SAC) algorithm~\cite{DBLP:journals/corr/abs-1812-05905} to devise a practical reinforcement learning method. Table~\ref{table:hyperparameters} list the hyperparameters used in the experiments. The pseudocode of the encoder and transition model architecture is listed at the end of the appendix.

\begin{table}[h]
\caption{Hyperparameters used for DMC experiments.}
\label{table:hyperparameters}
\vskip 0.15in
\begin{center}
\begin{small}
\begin{tabular}{ll}
\toprule
\textbf{Hyperparameter} & \textbf{Value}  \\
\midrule
Observation shape    & $(84,84)$  \\ 
Replay buffer size    & $100000$ \\ 
Initial steps    & $1000$  \\ 
Stacked frames    & $3$  \\ 
Action repeat    & $2$ finger, spin; walker, walk\\
 & $8$ cartpole, swingup \\
 & $4$ otherwise  \\
Hidden units (MLP)    & $1024$  \\ 
Evaluation episodes    & $10$  \\ 
Optimizer    & Adam  \\ 
$(\beta_1,\beta_2) \rightarrow (f_\theta, \pi_\psi, Q_\phi)$   & $(.9,.999)$  \\
$(\beta_1,\beta_2) \rightarrow (\alpha)$   & $(.5,.999)$  \\
Learning rate & $1e-4$ $\alpha$ \\
  & $1e-3$ otherwise\\ 

Batch size    & $128$  \\ 
$Q$ function EMA $\tau$ & $0.01$ \\
Critic target update freq & $2$ \\
Convolutional layers & $4$ \\
Number of filters & $32$ \\
Non-linearity & ReLU \\
Encoder EMA $\tau$ & $0.05$ \\
Latent dimension & $50$ \\
Discount $\gamma$ & $.99$ \\
Initial temperature & $0.1$ \\

\bottomrule
\end{tabular}
\end{small}
\end{center}
\vskip -0.1in
\end{table}

\subsubsection{Network architecture}
The pseudocode of the encoder and transition model architecture is listed below. The actor and critic both use the same encoder to embed image observations.
 
Pseudocode of the encoder:
\lstinputlisting[language=Python]{scripts/encoder.py}

Pseudocode of the transition model:
\lstinputlisting[language=Python]{scripts/transition_model.py}

Pseudocode of the SimSR loss:
\lstinputlisting[language=Python]{scripts/loss_function.py}

\end{document}